%% file: main.tex
\documentclass{article} %

\usepackage[left=3cm,right=3cm,top=4cm,bottom=4cm]{geometry}

\usepackage{authblk}

\input{math_commands.tex}

\usepackage{hyperref}
\usepackage{url}

\usepackage{tikz}
\usepackage{amsmath}

\renewcommand{\eqref}[1]{(\ref{#1})}

\usepackage[textsize=tiny]{todonotes}
\usepackage{amssymb}
\usepackage{mathtools}
\usepackage{amsthm}

\usepackage[utf8]{inputenc} %
\usepackage[T1]{fontenc}    %
\usepackage{hyperref}       %
\usepackage{url}            %
\usepackage{booktabs}       %
\usepackage{amsfonts}       %
\usepackage{nicefrac}       %
\usepackage{microtype}      %
\usepackage{xcolor}         %

\usepackage{booktabs}
\usepackage{soul}
\usepackage[numbers]{natbib}

\usepackage[justification=centering]{subfig}
\usepackage{graphicx}

\usepackage{float}          %

\usepackage{algorithm}
\usepackage{algorithmic}
\usepackage{bm}

\usepackage{cleveref}
\usepackage{fancyhdr}

\usepackage{enumitem}

\input{macros}
\theoremstyle{plain}
\newtheorem{theorem}{Theorem}[section]

\newtheorem{lemma}[theorem]{Lemma}

\theoremstyle{definition}

\theoremstyle{remark}

\title{Differentially Private Mixing of Public Datasets Improves Private Learning}

\date{}

\begin{document}

\author[1]{Yufei Chen}
\author[2]{Tejumade Afonja}
\author[1]{Anvith Thudi}
\author[1]{Nicolas Papernot}

\affil[1]{University of Toronto and Vector Institute, Toronto, Canada}
\affil[2]{CISPA Helmholtz Center for Information Security, Saarbrucken, Germany}

\maketitle

\begin{abstract}
\input{sections/abstract_v3}

\end{abstract}

\input{sections/introduction_v4}

\input{sections/related_work}

\input{sections/Preliminaries_v2}

\input{sections/method}

\input{sections/experiments}

\input{sections/conclusion}

\input{sections/acknowledgements}

\subsection*{AI use statement}

We only used generative AI tools for formatting the calculations presented in Appendix A.2, debugging code, formatting the figures and tables in this paper, and checking if there was literature on optimizing public pretraining data we missed. All code and references were verified by the authors. We take responsibility for the final content of this work, including text, claims or artifacts produced with the aid of generative AI.

\bibliographystyle{abbrvnat}
\bibliography{references}

\appendix

\input{sections/appendix}

\end{document}

%% file: math_commands.tex
\usepackage{amsmath,amsfonts,bm}

\def\eqref#1{equation~\ref{#1}}

\def\1{\bm{1}}

\DeclareMathAlphabet{\mathsfit}{\encodingdefault}{\sfdefault}{m}{sl}
\SetMathAlphabet{\mathsfit}{bold}{\encodingdefault}{\sfdefault}{bx}{n}

%% file: sections/abstract_v3.tex
Many machine learning applications involve sensitive data and therefore require training under differential privacy (DP). However, DP training often degrades model utility. In some cases, first pre-training the model on "public" data before finetuning with DP on the sensitive data can reduce the drop in utility. However, the success of this depends on how relevant the selected public dataset is to the sensitive data. We introduce the first pipeline that privately learns the mixture of several public datasets to pretrain on for a given sensitive downstream task. Our key insight is that we can privately find the best mixture of multiple public datasets by privately learning a low-dimensional linear model. We tested our method on the NIH dataset for X-ray classification and the ENRON email dataset for language modeling. Applying our method to find tailored mixtures of X-ray datasets to pretrain on for diseases in the NIH ChestX-ray14 dataset, we improved macro AUC by up to 0.037 across privacy budgets compared to the baselines, with gains as large as +22.8\% relative AUC on Cardiomegaly at $\epsilon=1$. For DP training on the ENRON dataset, pre-training on our mixture of The Common Pile (a collection of public-domain text datasets) decreased test perplexity by 16\% relative to the baseline mixtures.

%% file: sections/introduction_v4.tex
\section{Introduction}

\begin{figure}[t!]
  \centering
  \includegraphics[width=\linewidth]{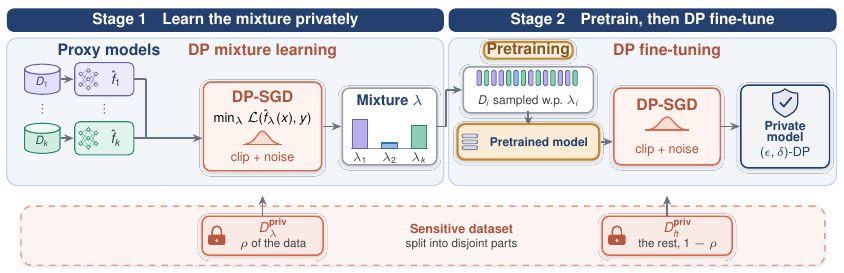}
  \caption{\textbf{Overview of DP-MixMin.} Proxy models trained on public data are combined with
  mixture weights $\lambda$ learned with DP-SGD on a fraction $\rho$ of the sensitive data
  (split A). We then pretrain on public data according to $\lambda$ and DP fine-tune on the rest
  (split B). }
  \label{fig:pipeline}
\end{figure}

For many machine learning tasks, the only data we have comes from sensitive sources, such as medical records~\citep{johnson2019mimiccxr} or conversations between individuals~\citep{klimt2004introducing}. Models trained on such data, even if the data is anonymized, are known to leak information about the individuals who contributed to the training dataset~\citep{shokri2017membership,carlini2021extracting}. Training with differential privacy (DP) is the de-facto framework for addressing this concern~\citep{dwork2006calibrating,dwork2014algorithmic}; DP guarantees that the models trained using an individual's data are indistinguishable to the models trained without their data. However, training with DP often comes with a 
large drop in model utility~\citep{abadi2016deep, de2022unlocking}. Past work has observed that first pre-training a model on public data (data that can be treated as non-sensitive, hence not contributing to privacy leakage) before DP finetuning the model on the sensitive data can sometimes reduce the drop in model utility~\citep{de2022unlocking,yu2022differentiallyprivatefinetuning}. Importantly, the effectiveness of this pre-training depends on how related the selected public data is to the target (sensitive) data~\citep{ben2010theory,isik2024scaling,pouget2024no, dominguez2024training, ganesh2023whyispublic, tramer2022considerations}. To the best of our knowledge, pre-training on all the public datasets uniformly is common practice and state-of-the art for privately mixing the public dataset: we improve this.

In this paper, we study how to privately choose the public pre-training mixture that gives the best downstream performance on a given sensitive dataset. Finding the best mixture privately can generically be approached with private hyperparameter selection techniques~\citep{liu2019privateselection,papernot2022hyperparameter}, where one trains on several different mixtures and then releases the best found mixture. However, this comes with significant computational costs: the space of mixtures which we need to trial (i.e., train on) grows exponentially with the number of public datasets. Compounding this, every candidate mixture we evaluate touches the sensitive data, so the more mixtures we try, the higher the privacy cost~\citep{papernot2022hyperparameter}. We show that it is possible to significantly improve downstream utility over standard approaches to public data mixing.

Our key insight is that we can privately learn the weighting of public datasets by privately optimizing a convex objective. This builds off MixMin~\citep{thudi2026mixminfindingdatamixtures}, a non-private data mixing method which showed that as models approach Bayes optimal (the best a model can do with enough compute and data), finding the best weighting of pre-training datasets for a downstream task reduces to minimizing a convex objective. We build onto this insight by realizing this objective is amenable to private learning with little utility degradation. Formally, if $k$ is the number of public datasets, by using existing DP algorithms~\citep{bassily2019private,feldman2020private,asi2021private,bassily2021differentially} we have the privacy cost for a fixed error grows at worst $O(\sqrt{k})$. In contrast, for certain data mixing problems, private hyperparameter search would need to try exponentially many mixtures, which translates to a privacy cost that grows linearly in $k$~\citep{papernot2022hyperparameter}. However, a key limitation of this convex objective is that it requires having models near Bayes optimal on each of the individual pre-training sources; we hence explore empirical approaches.

Our pipeline, DP-MixMin, empirically instantiates the DP convex optimization for data mixing alongside further DP finetuning (Figure~\ref{fig:pipeline}); our pipeline provides one end-to-end privacy guarantee.  We first split the sensitive data into two disjoint parts: one for mixture learning with a variant of DP-SGD~\citep{abadi2016deep}, and the other for DP fine-tuning the model pre-trained on the mixture. Concretely, we empirically learn the mixture by replacing the Bayes optimal models for each source with a proxy model trained on each individual public dataset (following past work~\citep{thudi2026mixminfindingdatamixtures}); these models do not see the private data, so leak no privacy. We then optimize the convex objective for data mixing with DP-SGD, providing a DP bound for the mixture. In the final stage we pre-train on this mixture, and then apply DP finetuning on the held out set. Because the two DP stages (mixing and fine-tuning) use disjoint datasets, parallel composition~\citep{dwork2014algorithmic} bounds the pipeline's overall privacy cost by the worse of the two steps, not their sum. So \emph{learning the mixture costs data, not privacy}. DP-MixMin fine-tunes on less data than our baseline of finetuning a model trained uniformly over all public datasets, %
but we show that better mixtures more than make up for it.

We empirically tested DP-MixMin across medical imaging and language tasks, and found across varying DP guarantees it consistently improved baselines (Figure~\ref{fig:pareto}). For medical imaging, we used the 14 disease classification tasks in the NIH chest X-ray dataset~\citep{Wang_2017} as sensitive datasets, and CheXpert~\citep{irvin2019chexpertlargechestradiograph} and MIMIC-CXR~\citep{johnson2019mimiccxr} as public sources. These sources also contain classification tasks over chest X-rays, but differ in the diseases and how the images were labelled. DP-MixMin consistently outperformed the baselines, with an average AUC gain of $\approx 0.03$, including a $22.8\%$ relative gain on the Cardiomegaly class. For language modeling, we used Enron~\citep{klimt2004introducing} as the sensitive dataset, targeting \emph{user-level} DP; protecting all of a user's emails, a strictly stronger and costlier guarantee than protecting single emails~\citep{kandpal2024user,levy2021learning,chua2024mind}. Pre-training on mixtures of The Common Pile~\citep{kandpal2025commonpilev018tb} decreased test perplexity by up to $16\%$, with similar gains under example-level DP on OpenWebMath~\citep{paster2023openwebmathopendatasethighquality}. We also found that DP-MixMin recovered mixtures close to the non-private solution in most settings, despite using a fraction of the sensitive data.

To conclude, our contributions are:

\begin{enumerate}
    \item DP-MixMin: The first end-to-end privatized pretraining mixture-learning pipeline. 
    \item Theoretically showing that when we train models near Bayes optimal, our procedure asymptotically outperforms the baseline of private hyperparameter search.
    \item Empirically showing that across medical imaging and language modeling tasks, DP-MixMin improves the performance of public-pretraining over past mixture finding approaches.
\end{enumerate}

\begin{figure}[t!]
  \centering
  \includegraphics[width=1.0\linewidth]{figures/dp_mixmin_pareto.png}
  \caption{\textbf{DP-MixMin beats Uniform and Random Search at every $\varepsilon$.} Macro AUC on CheXpert/MIMIC-CXR$\rightarrow$NIH ($\uparrow$) and Enron test perplexity at $\rho=0.3$ ($\downarrow$).}
  \label{fig:pareto}
\end{figure}

%% file: sections/related_work.tex
\section{Related Work}
\label{sec:related}

We now discuss past work on DP finetuning and non-private mixing we build on, and the limitations of previous public data selection for DP learning: they could not improve over a uniform mixture.

\paragraph{DP public pretraining and non-private data mixing.} Public pretraining followed by private finetuning drives state-of-the-art DP results across language~\citep{li2022largelanguagemodels, yu2022differentiallyprivatefinetuning} and vision~\citep{de2022unlocking, mehta2022largescaletransfer}, and its effectiveness depends heavily on which public data is used~\citep{ganesh2023whyispublic, tramer2022considerations}. Non-privately, optimizing mixtures over many sources has shown to provide large gains in pre-training performance~\citep{gao2020pile800gbdatasetdiverse, xie2023doremi, liu2024regmix, gururangan2020dontstoppretraining, thudi2026mixminfindingdatamixtures}; among these, MixMin~\citep{thudi2026mixminfindingdatamixtures}, which we build on, showed that the mixture-learning objective is (asymptotically) equivalent to fitting a linear model over per-source proxy model predictions.

\paragraph{Prior work on private selection of public datasets.} \citet{gu2025choosingpublicdatasetsprivate} propose GSD, which scores a candidate public dataset's relevance by the distance between its gradients' top-k subspace and the private task's, and picks the single source with the smallest distance to the private task. %
However, GSD %
breaks down by construction when the public sources share the same input space $\mathcal{X}$: their scoring uses a single model's gradient on randomly labeled examples from each source, so two sources with identical inputs become indistinguishable regardless of their actual label distributions.\footnote{We also implemented GSD~\citep{gu2025choosingpublicdatasetsprivate}, and confirmed we could not obtain meaningful results in our setting; implementation details and results are provided in Appendix~\ref{app:gsd}.} \citet{gu2025choosingpublicdatasetsprivate} also present no end-to-end private experiments---the privatized version of the selection step is proposed but never empirically validated. Our method, DP-MixMin, can operate when input distributions are shared and provides DP guarantees. %
\citet{yu2024selectivepretrainingprivatefinetuning} train a DP domain classifier to rank examples within a single pre-chosen public corpus and pretrain on the top-ranked subset. While their method improves over the unfiltered baseline at smaller model scales, the benefit is no longer apparent at GPT2-Medium (355M parameters)~\footnote{Figure~2 in~\citet{yu2024selectivepretrainingprivatefinetuning} shows selective pretraining on 15\% of OpenWebText and full pretraining on 100\% of OpenWebText reaches the same perplexity.}. %
In contrast, DP-MixMin continues to deliver substantial gains over unweighted pretraining at GPT2-Medium scale. %

%% file: sections/Preliminaries_v2.tex
\section{Preliminaries}
\label{sec:prelim}

In this section, we review differential privacy (DP), data mixing, and relevant past results.

\paragraph{Differential privacy.}
A randomized mechanism $\mathcal{M} : \mathcal{D} \to \mathcal{R}$
satisfies $(\epsilon, \delta)$-differential privacy~\citep{dwork2006calibrating} if, for every pair of adjacent datasets $D, D' \in \mathcal{D}$
(differing in a single record) and every measurable set
$S \subseteq \mathcal{R}$,
\[
\Pr[\mathcal{M}(D) \in S] \;\leq\;
e^{\epsilon}\,\Pr[\mathcal{M}(D') \in S] \;+\; \delta,
\]
where $\epsilon \geq 0$ is the privacy budget and $\delta \in [0,1]$
bounds the probability of failure. 

We will use DP-SGD, which privatizes the SGD algorithm by clipping individual gradients and adding noise~\citep{abadi2016deep}, and the post-processing property of DP which states any function (independent of the private dataset) of a DP algorithm is also a DP mechanism with the same guarantees~\citep{dwork2014algorithmic}. We also leverage parallel composition, which states that if two DP mechanisms operate on disjoint parts of the private dataset, then the overall DP guarantee of releasing both mechanisms is at most the worst DP guarantee of the two mechanisms~\citep{mcsherry2009privacy}.

\paragraph{User-level differential privacy.}
\label{user-level-dp}
The definition above protects individual records (\emph{record-level} or
\emph{example-level} DP). However, a single individual may contribute multiple records, and record-level guarantees can degrade substantially for such individuals.
\emph{User-level} DP~\citep{mcmahan2018learning,levy2021learning}
aims to resolve this. Suppose each dataset $D$ is partitioned by users $D = \bigcup_{u \in U} D_u$, where $D_u$ is the (possibly multi-record)
contribution of user $u$. Two datasets $D, D'$ are \emph{user-adjacent}
if they only differ by the $D_u$ of a single user. A mechanism $\mathcal{M}$ satisfies
$(\epsilon,\delta)$ user-level DP if the inequality above holds for every
pair of user-adjacent datasets and every measurable $S \subseteq \mathcal{R}$. Note that while Record-level DP implies user-level DP through group privacy~\citep{dwork2014algorithmic}, this bound is
often loose. %
User-level guarantees are typically obtained directly without relying on record-level DP methods, e.g., by running DP-SGD~\citep{abadi2016deep} with sampling, clipping, and noise applied
per user rather than per example~\citep{mcmahan2018learning}. User-level DP satisfies both post-processing and parallel composition, provided the private dataset is split along user boundaries.

\paragraph{Data mixing setup.}
We consider learning from a collection of $k$ public datasets
$\{D_i^{pub}\}_{i=1}^{k}$ in order to improve performance on a private
dataset. Each public dataset $D_i^{pub}$ is assumed to be drawn
from an underlying source distribution $d_i$ over a
shared input--output domain $\mathcal{X} \times \mathcal{Y}$. The
private downstream dataset $D^{priv}$ is drawn from a target
distribution $d_{t}$, which is also over $\mathcal{X} \times \mathcal{Y}$.

Let $\mathcal{H}$ denote the hypothesis space for our models.
We represent models as functions $f : \mathcal{X} \to \mathcal{O}$, where
$\mathcal{O}$ is some output space (e.g., label probabilities), and we
evaluate performance using a loss function
$L : \mathcal{O} \times \mathcal{Y} \to \mathbb{R}_{+}$. We require the
loss function to be either cross-entropy (CE) or mean squared error
(MSE), as required for the MixMin algorithm which we
adapt~\citep{thudi2026mixminfindingdatamixtures}. Our goal is to learn sampling weights
$\lambda \in \Delta_k$ over the public datasets. %
Operationally, we sample an index $i \sim \lambda$ and then a data point from $D_i^{pub}$, giving samples from $d_\lambda = \sum_{i=1}^{k} \lambda_i\, d_i$.

To simplify data mixing formulations, we assume no covariate
shift among public sources: $d_i(x) = d_j(x)$ for all $i,j$. %
This assumption is used only to simplify how we frame the mixture problem, and it is also trivially satisfied in our experiments (i.e., generative tasks have no "covariates", and disease classification is done over the same distribution of images). %
In general, %
if there is covariate shift, one should use a more general data mixing objective that optimizes the joint distribution over the inputs and outputs to be the same as the target distribution (see Section 6.1 in~\citet{thudi2026mixminfindingdatamixtures}).

\paragraph{Data mixing as bi-level optimization.}
The data mixing problem asks which $\lambda \in \Delta_k$ yields a
model trained on $d_\lambda$ that performs best on $d_{t}$. This is
a bi-level optimization,
\begin{equation}
\min_{\lambda \in \Delta_k}\;
\mathbb{E}_{(x,y) \sim d_{t}}\!\bigl[L(f_\lambda(x),\, y)\bigr]
\quad\text{where}\quad
f_\lambda \in \arg\min_{f \in \mathcal{H}}\;
\mathbb{E}_{(x,y) \sim d_\lambda}\!\bigl[L(f(x),\, y)\bigr],
\label{eq:DM}
\end{equation}
that is generally hard to solve: the inner minimization over
$\mathcal{H}$ typically has to be run for many candidates $\lambda$. Beyond computational complexity, we note that private hyperparameter search (which optimizes this bi-level problem with random search) also has worse privacy-utility trade-offs. 

\paragraph{MixMin: a convex reduction.}
\citet{thudi2026mixminfindingdatamixtures} showed that under the assumptions above, as our model $f_{\lambda}$ approaches Bayes optimal for $d_{\lambda}$ (i.e., large compute and data limit),
\cref{eq:DM} reduces to a single convex problem in $\lambda$. Letting $f_i$ be the Bayes optimal model for each $d_i$, the optimal $\lambda$ solves:
\begin{equation}
\min_{\lambda \in \Delta_k}\;
\mathbb{E}_{(x,y) \sim d_{t}}
L\big(\sum_{i=1}^{k} \lambda_i\, f_i(x),\; y\big).
\label{eq:mixmin}
\end{equation}
In practice, it was found that running this optimization with empirically trained proxy models $\hat f_i$ still outperformed other mixture-finding methods~\citep{thudi2026mixminfindingdatamixtures}. %

%% file: sections/method.tex
\section{Differentially Private Data Mixing}

In this section we first highlight how privately optimizing the reduced data mixing objective (\cref{eq:mixmin}) leads to better utility-privacy trade-offs than private hyperparameter search. We then describe our empirical method for privately optimizing the reduced objective, which we call DP-MixMin. 

\subsection{Improved Utility over Private Hyperparameter Search Asymptotically}

Privately optimizing the MixMin reduction leads to asymptotically better privacy-utility
trade-offs than private hyperparameter search using Poisson sampling~\citep{papernot2022hyperparameter}
as we increase the number of public datasets $k$. We prove that there are cases where private
hyperparameter search requires the DP parameter $\epsilon$ to grow at least linearly with $k$, for a fixed error. In contrast, existing DP stochastic convex
optimization (SCO) algorithms applied to the MixMin reduction only require the DP guarantee
to grow with $\sqrt{k}$, meaning we can achieve better privacy at a given utility.
We state this formally in Lemma~\ref{lem:asymptotic_improvement}, where we suppress constants
that do not depend on $k$.

We quantify utility by the excess error of the recovered mixture $\hat{\boldsymbol\lambda}$
relative to the optimal mixture:
\begin{equation}
\label{eq:excess-error}
\alpha \;:=\; \mathbb{E}\big[F(\hat{\boldsymbol\lambda})\big] - F(\boldsymbol\lambda^\star),
\qquad
F(\boldsymbol\lambda) \;=\; \mathbb{E}_{(x,y)\sim d_t}\!\Big[L\Big(\textstyle\sum_{i=1}^{k}\lambda_i f_i(x),\,y\Big)\Big],
\end{equation}
where $F$ is the MixMin objective of \cref{eq:mixmin},
$\boldsymbol\lambda^\star=\arg\min_{\boldsymbol\lambda\in\Delta_k}F(\boldsymbol\lambda)$ is its
minimizer.

\begin{lemma}
\label{lem:asymptotic_improvement}
    Suppose the conditions for the MixMin convex data mixing reduction hold (\cref{eq:mixmin}). There exists a Lipschitz data mixing objective such that private hyperparameter search with
    Poisson sampling requires $\epsilon = \Omega(k)$ to achieve a fixed
    (sufficiently small) excess error $\alpha$ with constant
    probability. In contrast, there exist DP-SCO algorithms that, for any Lipschitz data mixing objective, have $\epsilon = O(\sqrt{k})$ to achieve excess error $\alpha$, for mixture-learning data size $n = \Omega(1/\alpha^2)$.
\end{lemma}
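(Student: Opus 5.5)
The plan is to prove the two halves separately: an upper bound quoted from the DP-SCO literature, and a lower bound from an explicit hard instance combined with the known accounting for private hyperparameter search.

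\emph{Upper bound.} Assume $L$ is convex in its first argument and the map $\boldsymbol\lambda \mapsto L(\sum_i \lambda_i f_i(x), y)$ is $G$-Lipschitz over $\Delta_k$, with $G$ independent of $k$. Then $F$ is a Lipschitz convex stochastic objective over a domain of diameter at most $\sqrt{2}$. I would invoke the optimal DP-SCO rate of \citet{bassily2019private,feldman2020private}. In dimension $k$, with $n$ samples and $(\epsilon,\delta)$-DP, the excess population risk is
\[
O\Big(\tfrac{1}{\sqrt{n}} + \tfrac{\sqrt{k\log(1/\delta)}}{\epsilon n}\Big).
\]
Making each term at most $\alpha/2$ requires $n=\Omega(1/\alpha^2)$ for the statistical term. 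For the privacy term it suffices that $\epsilon \ge C\sqrt{k\log(1/\delta)}/(\alpha n)$. With $n\asymp 1/\alpha^2$ this becomes $\epsilon = O(\alpha\sqrt{k})$, which is $O(\sqrt{k})$ for fixed $\alpha$. Noisy mirror descent over the simplex \citep{asi2021private} would even give a $\log k$ dependence, which only strengthens the claim. This half is routine.

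\emph{Lower bound.} I would construct a family of $2^k$ (or $\binom{k}{k/2}$) well-separated candidate optima, each a vertex-like mixture, and build a Lipschitz objective $F$ for which only mixtures very close to the true optimum achieve excess error below $\alpha$. For example, let the $f_i$ be indicator-like predictors on disjoint regions of $\mathcal{X}$, with the target labels on region $i$ favouring source $i$ or a "null" source. Under MSE, $F$ is then separable and its optimum encodes $k$ bits.

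I would then model private hyperparameter search with Poisson sampling \citep{papernot2022hyperparameter} as follows: it draws $T$ (Poisson) random candidate mixtures from a fixed distribution, evaluates each with an $\epsilon_0$-DP mechanism, and releases the best. The random-search component needs $\mathbb{E}[T] \gtrsim 2^{ck}$ draws to hit an $\alpha$-good mixture with constant probability, since a random mixture matches $\Omega(k)$ bits only with exponentially small probability. The privacy accounting of \citet{papernot2022hyperparameter} for Poisson-distributed $T$ with mean $\mu$ gives total $\epsilon \ge \epsilon_0 + \Omega(\log \mu)$; more precisely, their bound is tight up to constants, with $\epsilon \approx \epsilon_0 + \log\mu$ matching the "one-of-many release" cost. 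Substituting $\mu = 2^{\Omega(k)}$ yields $\epsilon=\Omega(k)$.

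\emph{Main obstacle.} The delicate step is making the lower bound rigorous. The upper-bound accounting of \citet{papernot2022hyperparameter} does not itself imply that $\epsilon$ must grow. I must exhibit neighbouring datasets on which the released index distribution changes by a factor $e^{\Omega(\log\mu)}$.

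My first attempt would pick neighbouring datasets where one record flips which candidate scores best. When $T\sim\mathrm{Poisson}(\mu)$, the probability of outputting that candidate then scales with $\mu$ times its sampling probability on one dataset, versus roughly $\mu^{-1}$ relative weight on the other. The log-ratio is therefore $\Omega(\log\mu)=\Omega(k)$.

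If that direct argument fails, the fallback is a packing/Fano-style argument. It would show that any mechanism outputting one of $2^{\Omega(k)}$ candidates from a random search whose base sampler ignores the data must incur $\epsilon=\Omega(k)$ to be $\alpha$-accurate with constant probability, for $\alpha$ small enough that $\alpha$-good sets are disjoint.
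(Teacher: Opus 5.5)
\paragraph{Upper bound.} Your upper-bound half is essentially the paper's argument: the same DP-SCO rate from \citet{bassily2019private,feldman2020private} and the same substitution $n\asymp 1/\alpha^2$.

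\paragraph{The hard instance does not give an exponentially small hit probability.} The gap is in the lower bound, at the step that carries all the weight. You need an explicit instance whose $\alpha$-good set has exponentially small mass under the search's sampler, and your sketched instance does not provide one. Take indicator predictors on disjoint regions $R_i$ with masses $p_i$ ($\sum_i p_i\le 1$) and labels in $[0,1]$. Then $F(\lambda)=\sum_i p_i\,\mathbb{E}[(\lambda_i-y)^2\mid x\in R_i]+\mathrm{const}$. So the $i$-th term varies by at most $p_i$ over $\Delta_k$, and $|F(\lambda)-F(\lambda')|\le 2\max_i p_i\,\|\lambda-\lambda'\|_1$. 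Hence at most $1/\alpha$ of your ``bits'' can matter at level $\alpha$. With equal masses every mixture is $4/k$-optimal, so for fixed $\alpha$ a single random draw already succeeds once $k\ge 4/\alpha$.

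No other construction can supply $2^{\Omega(k)}$ optima with pairwise-disjoint $\alpha$-good sets either, when $\alpha$ and the Lipschitz constant $G$ are fixed. Disjointness forces the minimizers to be $2\alpha/G$-separated in $\ell_2$. But Maurey's empirical method covers $\Delta_k$ by $k^{O(G^2/\alpha^2)}$ balls of radius $\alpha/G$. So, as in the paper, hardness has to be measured against the fixed uniform sampler and must come from the geometry of the simplex; a single instance then suffices.

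The paper takes $f_i(x)=e_i$, $y\equiv\lambda^\star$ and MSE, so $F(\lambda)=\|\lambda-\lambda^\star\|_2^2$. It computes the uniform mass of the $\sqrt{\alpha}$-ball as $(k-1)!\,\pi^{(k-1)/2}\alpha^{(k-1)/2}/\Gamma(\tfrac{k+1}{2})$, whence $\mathbb{E}r=\Omega(\alpha^{-(k-1)/2})$. That formula needs the ball to lie inside the simplex, which for interior $\lambda^\star$ means $\sqrt{\alpha}\lesssim\min_i\lambda^\star_i\le 1/k$. A vertex target, in the spirit of your ``vertex-like'' idea, gives a genuinely fixed-$\alpha$ version. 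For $\lambda^\star=e_1$, $F(\lambda)\le\alpha$ forces $\lambda_1\ge 1-\sqrt{\alpha}$. Since $\lambda_1\sim\mathrm{Beta}(1,k-1)$, this event has uniform probability exactly $\alpha^{(k-1)/2}$.

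\paragraph{The privacy step.} You claim the accounting of \citet{papernot2022hyperparameter} ``gives $\epsilon\ge\epsilon_0+\Omega(\log\mu)$'' and is tight up to constants; this is not supported. Their Theorem 6 is a privacy guarantee, i.e.\ an upper bound stated in R\'enyi DP. The paper's proof does no more than read off its $\log\mathbb{E}r$ term, and that reading is all you need to match it. Your two routes to an actual lower bound do not work as stated.

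The ``$\mu$ versus $\mu^{-1}$'' weights in your neighbouring-dataset argument are not derived, and no such argument can hold in general. Suppose each trial reports its candidate with a sensitivity-$\Delta$ score plus $\mathrm{Lap}(\Delta/\epsilon_0)$ noise, and only the winning mixture is released. Then Poisson random search is report-noisy-max over data-independent candidates, which is $2\epsilon_0$-DP for every $\mu$. A large $\mu$ then costs selection error of order $\Delta\log\mu/\epsilon_0$ (with $\Delta\propto 1/n$), not an additive $\log\mu$ in $\epsilon$.

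The Fano fallback fails for the covering reason above. Moreover, a packing argument that does not use the random-search structure would equally bind the DP-SCO algorithms of your upper bound. A rigorous separation has to be stated with the $n$-dependence explicit on both sides, rather than as $\epsilon=\Omega(k)$ in isolation.
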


The key idea of the proof is to consider a specific Lipschitz data mixing problem where finding a good mixture is equivalent to finding a mixture within an $\ell_2$ ball around the best mixture. We then show the probability of uniformly sampling a mixture $\lambda \in \Delta^k$ that is also in this ball vanishes exponentially with $k$, which implies private hyperparameter search needs to try many random mixtures and hence use larger privacy budget $\epsilon$. In contrast, we are able to apply existing DP-SCO results to any reduced Lipschitz data mixing objectives to show we can achieve much less error at lower privacy budgets. %
The full proof is in Appendix~\ref{app:proofs}.
In Appendix~\ref{app:sqrtk} we empirically show our proposed DP data mixing algorithm (Algorithm~\ref{alg:dp-mixmin}) follows an  $O(\sqrt{k})$ excess error rate.

While Lemma~\ref{lem:asymptotic_improvement} shows optimizing \cref{eq:mixmin} should lead to better utility, this assumes we can train models close to Bayes optimal. We turn to how we empirically approximate \cref{eq:mixmin}.

\subsection{Implementing DP-MixMin}
\label{ssec:implementation}

We now explain how we empirically optimize the MixMin objective (\cref{eq:mixmin}), train the proxy models $\hat{f}_i$ which we use instead of the Bayes optimal $f_i$, and how we allocate the privacy budget for private mixture finding and finetuning. The overall pipeline is summarized in Figure~\ref{fig:pipeline}.

\paragraph{Making MixMin Differentially Private.} %

To empirically optimize \cref{eq:mixmin} with DP, %
we replace the Bayes optimal models $f_i$ with models $\hat{f}_i$ trained empirically on $D_i^{pub}$, and the expectation over $d_t$ with the empirical expectation over a dataset $D^{priv}_{\lambda}\sim d_t$. %
To now optimize this we run a post-processing of the DP-SGD~\citep{abadi2016deep} update: for each iteration, per-example gradients w.r.t $\lambda$ are computed, clipped to a fixed norm, and perturbed with Gaussian noise before being aggregated. The resulting noisy gradient is then passed to the entropic-descent update. The overall procedure satisfies $(\epsilon,\delta)$-DP by the DP guarantees of the sampled Gaussian mechanism~\citep{mironov2019r}, post-processing~\citep{dwork2014algorithmic}, and composition theorems to accumulate DP guarantees across steps~\citep{mironov2017renyi}~\footnote{In particular, we use the DP accountant for DP-SGD provided by Opacus~\citep{opacus}}. See Algorithm~\ref{alg:dp-mixmin}, with blue text describing changes to the non-private optimization described by \citet{thudi2026mixminfindingdatamixtures}. See Algorithm~\ref{alg:user-wise-dp-mixmin} in the Appendix for user-level DP.

\begin{algorithm}[t]
\caption{\textcolor{blue}{DP}-MixMin}
\label{alg:dp-mixmin}
\begin{algorithmic}[1]
\REQUIRE \textcolor{blue}{Target privacy guarantees ($\epsilon$,$\delta$)}, step size $\eta$, number of steps $n$, loss function $L$ (CE or MSE), \textcolor{blue}{private mixture-learning split $D_{\lambda}^{priv} \subseteq \mathcal{X} \times \mathcal{Y}$}, proxy models $\{\hat{f}_i : i = 1, \ldots, k\}$, \textcolor{blue}{clipping norm $C$}
\STATE Initialize $\lambda_i \leftarrow \tfrac{1}{k}$ for all $i = 1, \ldots, k$
\STATE Pre-compute $\{\hat{f}_i(x) : x \in D_{\lambda}^{priv}, \, i = 1, \ldots, k\}$ 
\FOR{$j = 1$ to $n$}
    \STATE $\hat{f}_\lambda(x) \leftarrow \sum_{i=1}^{k} \lambda_i \hat{f}_i(x)$
    \STATE $\ell(x, y) \leftarrow L(\hat{f}_\lambda(x), y)$ for all $(x, y) \in D_{\lambda}^{priv}$
    \STATE $g(x, y) \leftarrow \nabla_\lambda \ell(x, y)$
    \STATE \textcolor{blue}{$\tilde{g}(x, y) \leftarrow \text{Clip}(g(x, y), C)$} \COMMENT{\textcolor{blue}{clip per-example gradients}}
    \STATE $g_{\text{sum}} \leftarrow \sum_{(x, y) \in D_{\lambda}^{priv}} \textcolor{blue}{\tilde{g}(x, y)}$
    \STATE \textcolor{blue}{$z \sim \mathcal{N}(0, \sigma^2 I)$} \COMMENT{\textcolor{blue}{noise calibrated to $(\epsilon, \delta)$}}
    \STATE \textcolor{blue}{$\bar{g} \leftarrow \tfrac{1}{|D_{\lambda}^{priv}|} (g_{\text{sum}} + z)$}
    \STATE $\lambda_i \leftarrow \lambda_i \, e^{-\eta \textcolor{blue}{\bar{g}_i}}$ for all $i = 1, \ldots, k$
    \STATE $\lambda \leftarrow \lambda \,/\, \sum_{i=1}^{k} \lambda_i$ \COMMENT{project onto $\Delta_k$}
\ENDFOR
\STATE Return $\lambda$
\end{algorithmic}
\end{algorithm}

\paragraph{Reducing Proxy Model Costs.} %
Past work on empirically optimizing \cref{eq:mixmin} required training a separate proxy model on each public dataset~\citep{thudi2026mixminfindingdatamixtures}. We reduced costs for classification tasks by training a single multi-class proxy model to jointly predict the labels for each public dataset: $\hat{f}: \mathcal{X} \rightarrow \mathcal{O}^k$. To do so, we concatenated the label spaces of all $k$ public datasets (representing labels as one-hot or multi-hot vectors depending on the datasets), and trained %
the proxy model to output a probability vector over this combined label space. The proxy model for the $i$-th dataset is then obtained as $\hat{f}_i(x) = \hat{f}(x)[i]$, i.e., the $i$-th coordinate of the output. If learned to Bayes optimality, this proxy would provide the same output probabilities as a model trained only on $D_i^{pub}$. We found this significantly reduced training and storage costs when scaling the number of public datasets to mix. For generative language modeling, we followed the standard procedure of training an individual model on each public dataset. %
Further details are provided in Section~\ref{ssec:lang_models}.

\paragraph{Privacy budget allocation.}

We split the private target dataset $D^{priv}$ into two disjoint subsets: $D_{\lambda}^{priv}$, which is used to compute the mixture weights (Algorithm~\ref{alg:dp-mixmin}), and $D_{ft}^{priv}$, which is used to fine-tune the pre-trained model. By parallel composition~\citep{dwork2014algorithmic}, if the mechanism used
for mixture learning is $(\epsilon_1, \delta_1)$-DP and the
mechanism used for finetuning is $(\epsilon_2, \delta_2)$-DP, %
then the overall guarantee is: 
$(\epsilon, \delta) \;=\;
\bigl(\max(\epsilon_1, \epsilon_2),\, \max(\delta_1, \delta_2)\bigr)$. Note that the baseline of pre-training on the uniform mixture uses all of $D^{priv}$ for finetuning; we found that DP-MixMin improves over uniform mixtures, showing that we can use less data for finetuning if we optimize the mixture.

%% file: sections/experiments.tex
\section{Experiments}

We compared the full DP-MixMin pipeline---private mixture learning, mixture-weighted public pretraining, and DP finetuning---%
against three baselines:
\begin{enumerate}
    \item \textbf{Random Initialization}: privately fine-tune with no public pretraining, isolating the impact of public pretraining.
    \item \textbf{Uniform Pretraining}: pretrain with equal weight on every public source, isolating the value of learning a target-specific public-data mixture.
    \item \textbf{Random Search} (non-private): select the best mixture from a random sample of candidate mixtures~\citep{bergstra2012random, papernot2022hyperparameter} by training a proxy model on each one and evaluating performance on the downstream task. The selection of the best mixture from the random sample was done non-privately, so this over-estimates the performance of private hyperparameter search. Full details are in Appendix~\ref{app:rand_search}.%
\end{enumerate}
We evaluated these methods' downstream utility on the private dataset, across privacy budgets, sources of public data, and modalities. We also compare DP-MixMin to non-private MixMin.

\subsection{Image Classification}
\label{sec:vision_experiments}
We evaluated DP-MixMin for chest X-ray disease classification. %
Prior work applied DP-SGD to train models from scratch on these datasets~\citep{tayebiarasteh2024securing, gu2025choosingpublicdatasetsprivate}, which corresponds to our random initialization baseline. %
We found mixtures of other chest X-ray datasets (for different diseases) so that pretraining on them improves performance.
\subsubsection{Experimental Setup}

\paragraph{Datasets and Models.} 
We considered two different collections of pretraining datasets: (1) CheXpert~\citep{irvin2019chexpertlargechestradiograph} and (2) MIMIC-CXR~\citep{johnson2019mimiccxrjpglargepubliclyavailable}, both standard public sources for chest X-ray transfer learning~\citep{rajpurkar2017chexnetradiologistlevelpneumoniadetection, gu2025choosingpublicdatasetsprivate}. Note each of these datasets also consist of 14 chest X-ray disease classification tasks, but differ in the diseases and how the images were labelled. %
We used a ResNet-50 as the proxy model and a randomly initialized ResNet-18 for downstream pretraining and finetuning. The smaller downstream model reduced the cost of pretraining, allowing us to run more experiments within our compute budget.

\paragraph{Data preprocessing and computational usage} For all three chest X-ray datasets, CheXpert, MIMIC-CXR, and NIH, uncertain labels ($-1$) were mapped to $0$ (negative); $0$ and $1$ were kept as-is. The total compute used was $\sim$ 20,000 GPU hours over a cluster of L40 GPUs.

\paragraph{Methodology.} For each of the 14 downstream tasks, 80\% of the NIH data was used for training and 20\% was split between validation and test. DP-MixMin and Random Search accessed the private task only through parallel composition, %
using 10\% of the training split for mixture learning and the rest for finetuning. Uniform pretraining used all of the training split for DP finetuning. We repeated each method $5$ times with a different random partition of the NIH data into mixture-finding, finetuning, validation, and test sets, as well as a different random seed for model initialization and DP noise. See Appendix~\ref{app:img_exp_details} for details on training hyperparameters. We write $\rho = |D^{\mathrm{priv}}_\lambda| / |D^{\mathrm{priv}}|$ for the fraction used for
private mixture-learning: %
see Appendix~\ref{app:choosing-rho} for guidance on choosing it.

\subsubsection{Image Classification Results}

\begin{figure}[t]
\centering
\includegraphics[width=0.99\linewidth]{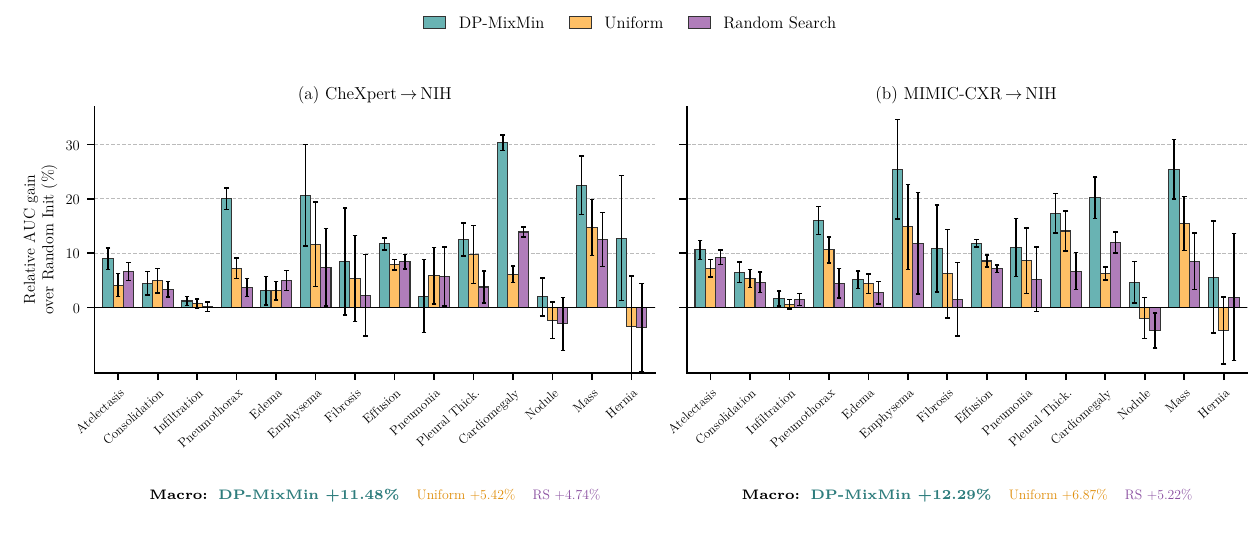}   %
\caption{\textbf{Per-task relative AUC gain over Random Init on NIH Chest X-ray at $\epsilon=1$.}
Bars show $\big(\mathrm{AUC(method)}-\mathrm{AUC(Random\ Init)}\big)/\mathrm{AUC(Random\ Init)}$ as a percentage, with propagated standard deviation over 3--5 seeds. The macro AUC gain (unweighted mean across 14 tasks) is reported next to each subplot title; DP-MixMin roughly doubles, and in most cases more than doubles, the gain of both Uniform and Random Search on each public pretraining source. Full raw AUCs are reported in Tables~\ref{tab:chexpert_to_nih_full} and~\ref{tab:mimic_to_nih_full}.}
\label{fig:nih_delta_over_random_init}
\end{figure}

\paragraph{Improvements were consistent across downstream tasks.} Across the 14 NIH tasks, we found DP MixMin always improved or was on-par (within one standard deviation) with the baselines of uniform pretraining and random search (Figure~\ref{fig:nih_delta_over_random_init}). This was the case whether CheXpert or MIMIC-CXR was used for the public datasets. %
In the case of Cardiomegaly, DP-MixMin's gain over Random Init was roughly $5\times$ Uniform's at $\epsilon=1$.

\paragraph{Gains were consistent across privacy budgets} DP-MixMin also outperformed the baselines at every privacy budget we tested: $+0.037, +0.032, +0.029$ absolute macro AUC gain over uniform at $\epsilon = 1, 2, 5$ for CheXpert pretraining(Figure~\ref{fig:pareto}), and $+0.033, +0.028, +0.024$ for MIMIC-CXR.  Tables~\ref{tab:chexpert_to_nih_full} and~\ref{tab:mimic_to_nih_full} in the Appendix report absolute AUC across settings.

\subsection{Language Models}
\label{ssec:lang_models}

We evaluated DP-MixMin on two downstream tasks: Enron~\citep{enron} email completion with user-level DP and OpenWebMath~\citep{paster2023openwebmathopendatasethighquality} with example-level DP. For Enron we compared to uniform pretraining and random search, and for OpenWebMath we only compared to uniform as it had often outperformed the other baseline (see Figure~\ref{fig:nih_delta_over_random_init} and Table~\ref{tab:enron_results}).

\subsubsection{Experimental Setup}
\paragraph{Datasets and Models.} We used 16 domains from The Common Pile~\citep{kandpal2025commonpilev018tb} as candidate public pretraining sources, selected from the 18 domains containing at least 10GB of data~\footnote{Two were excluded due to streaming limitations.}. For each domain we trained a GPT-2 Small (124M parameters) proxy on documents with at least 1024 GPT-2 tokens until roughly 45M tokens had been seen. %
The downstream model was GPT-2 Medium (355M parameters). %
DP finetuning was only done on the last two layers and output head of the model, due to memory allocation constraints on our GPUs.

\paragraph{Data preprocessing and computational usage.} For Enron, we restricted to the \texttt{\_sent\_mails} and \texttt{sent\_mails} folders, yielding 148 users and approximately 80{,}000 emails. This %
avoided multi-author records such as received emails, forwarded messages, and duplicated threads, which require more complex multi-attribution privacy treatment~\citep{ponomareva2025dpfydatapracticalguide}. For OpenWebMath, we streamed 50{,}000 samples to form the dataset. The total compute was $\approx5{,}000$ L40 GPU-hours.

\paragraph{Methodology} For Enron, we ablated $\rho = |D^{\mathrm{priv}}_\lambda| / |D^{\mathrm{priv}}|$ between $\{10\%, 20\%, 30\%\}$. For OpenWebMath, DP-MixMin used $10\%$. See Appendix~\ref{app:lang_exp_details} for further details.

\subsubsection{Language Results}

\begin{table}[t]
\centering
\small
\caption{\textbf{Enron test perplexity (mean $\pm$ std) after DP finetuning at $\delta{=}10^{-5}$.} 
DP-MixMin beats both Uniform and Random Search at every privacy budget. 
At $\epsilon{=}1$, the mixture quality (and stability) improves monotonically with the mixture weight finding 
fraction $\rho$.}
\label{tab:enron_results}
\begin{tabular}{ll ccc}
\toprule
\textbf{Method} & \textbf{$\rho$} & \textbf{$\epsilon{=}1$} & \textbf{$\epsilon{=}2$} & \textbf{$\epsilon{=}5$} \\
\midrule
\textsc{DP-MixMin}        & $0.3$ & $\mathbf{16.78 \pm 0.55}$ & $\mathbf{16.44 \pm 0.95}$ & $\mathbf{15.71 \pm 0.77}$ \\
\textsc{DP-MixMin}        & $0.2$ & $17.52 \pm 1.55$ & --- & --- \\
\textsc{DP-MixMin}        & $0.1$ & $17.97 \pm 2.72$ & --- & --- \\
\midrule
\textsc{Uniform}       & ---   & $19.88 \pm 1.10$ & $19.18 \pm 1.01$ & $18.24 \pm 0.89$ \\
\textsc{Random Search} & $0.3$ & $20.46 \pm 0.70$ & $19.72 \pm 0.66$ & $18.72 \pm 0.62$ \\
\bottomrule
\end{tabular}
\end{table}

\begin{table}[t]
\centering
\small
\caption{\textbf{OpenWebMath test perplexity (mean $\pm$ std) after DP finetuning at $\delta{=}10^{-5}$.}
DP-MixMin lowers test perplexity by roughly $13\%$ over uniform Common Pile pretraining at $\epsilon{=}1$.
The uniform baseline is essentially flat in $\epsilon$ on this domain, showing %
DP-MixMin at $\epsilon =1$ is still better.}
\label{tab:openwebmath_results}
\begin{tabular}{ll ccc}
\toprule
\textbf{Method} & \textbf{$\rho$} & \textbf{$\epsilon{=}1$} & \textbf{$\epsilon{=}2$} & \textbf{$\epsilon{=}5$} \\
\midrule
\textsc{DP-MixMin}  & $0.1$ & $\mathbf{12.22 \pm 0.17}$ & --- & --- \\
\midrule
\textsc{Uniform} & ---   & $14.05 \pm 0.17$ & $14.04 \pm 0.17$ & $14.02 \pm 0.17$ \\
\bottomrule
\end{tabular}
\end{table}

\paragraph{DP-MixMin's utility advantage grew under tighter $\epsilon$}: As shown in Table~\ref{tab:enron_results}, the relative perplexity reduction of DP-MixMin over the baselines widened as $\epsilon$ shrank on Enron: $13.9\%$ at $\epsilon{=}5$, $14.3\%$ at $\epsilon{=}2$, and $15.6\%$ at $\epsilon{=}1$. On OpenWebMath, we found the performance of uniform mixture weights was stable across $\epsilon = 1,2,5$, and DP-MixMin improved over it at $\epsilon  = 1$ (Table~\ref{tab:openwebmath_results}).

\paragraph{Public pretraining mixture mattered more than amount of finetuning data.} %
While each of the mixtures at ratios $\rho \in {0.1, 0.2, 0.3}$ outperformed uniform pretraining, allocating private data toward mixture estimation mattered more than reserving it for finetuning for Enron. The proxy-fraction ablation in Table~\ref{tab:enron_results} exhibited monotonic improvement with $\rho$ in both mean perplexity ($17.97 \to 17.52 \to 16.78$) and cross-seed standard deviation ($2.72 \to 1.55 \to 0.45$). %

\subsection{Comparing DP-MixMin to Non-Private MixMin}

\paragraph{Preventing privacy leakage.}
We audited whether the data mixture $\lambda$ leaked private information in
practice, following the canary-based auditing framework of \citet{jagielski2020auditing} %
using both natural and worst-case (max-clip)
canaries. Across all 14 NIH tasks and both pretraining sources, the DP parameter measured by the audit,
$\epsilon_{LB}$, stayed far below the claimed $\epsilon=1$, with the highest being $0.131$. %
Removing the DP noise from the
same audit recovered $\epsilon_{LB}$ as high as $5.6$, confirming that this step is genuinely privacy-sensitive and that DP is necessary (Tables~\ref{tab:audit-chexpert},
\ref{tab:audit-mimic}; further details in Appendix~\ref{app:audit}).

\paragraph{DP-MixMin algorithm often recovered non-private MixMin weights.} To understand the impact of DP on mixture finding, we compared the DP-MixMin weights $\boldsymbol{\lambda}_{\mathrm{DP}}$ to the non-private MixMin $\boldsymbol{\lambda}^{\star}$ (which used the entire $D^{priv}$ training split for mixture finding). %
Across modalities, DP-MixMin recovered mixtures close to $\boldsymbol{\lambda}^{\star}$ even with small privacy budgets and limited private data (Tables~\ref{tab:vision-weights-all},~\ref{tab:enron-weights},~\ref{tab:owm-weights} in the Appendix). By $\epsilon=1$, DP-MixMin achieved cosine similarity of  $0.97$ and $0.93$ for CheXpert$\to$NIH and MIMIC-CXR$\to$NIH respectively. %
On Enron, DP-MixMin reached cosine similarity %
$0.92$ having seen only 12 users at $\epsilon=2$ user-level DP. %
On OpenWebMath, using only $2.8$M tokens ($\rho=0.1$), DP-MixMin achieved cosine similarity above $0.9$ at $\epsilon=0.2$. %

\paragraph{DP-MixMin's downstream performance was on-par with non-private MixMin.}
Given the similarity in the weights, we further checked whether it translated to matched downstream utility. We compared the full DP-MixMin pipeline at $\varepsilon=1$ to a non-private MixMin baseline that paid no privacy for mixture finding, and also reused the \emph{entire} private dataset (both the mixing and finetuning splits) for weight-finding and DP finetuning steps. %
Across both pretraining sources, DP-MixMin was comparable to non-private MixMin: the macro AUC difference between DP-MixMin and non-private MixMin was only $0.717$ vs.\ $0.723$ for MIMIC-CXR and $0.711$ vs.\ $0.712$ for CheXpert. Most per-class differences lay within one standard deviation (Tables~\ref{tab:mimic-nonpriv-vs-dp},~\ref{tab:chexpert-nonpriv-vs-dp}).

%% file: sections/conclusion.tex
\section*{Conclusion}

In this paper, we initiated the study of privately optimizing public-data pretraining mixtures for a target sensitive dataset. Across medical imaging and language modeling, DP-MixMin improved private downstream utility over uniform pretraining and random mixture search, with large gains under low-prevalence labels, tight privacy budgets, and user-level privacy. Extending DP-MixMin
to tabular and other domain settings remains important future work. In particular, public tabular sources often differ in their input features, violating the no-covariate-shift assumption our reduction relies on (Section~\ref{sec:prelim}). Future work could also look into how to improve our current DP-MixMin pipeline. Initial results showed our approach was robust to less proxy models training (Appendix~\ref{app:proxy-robustness}) and whether parallel or sequential composition was used (Appendix~\ref{app:sequential}). However, the theoretical results of \citet{bassily2021differentially} suggest using DP Franke-Wolfe instead of DP mirror descent.

%% file: sections/acknowledgements.tex
\section*{Acknowledgements} We acknowledge the following sponsors, who support our
research with financial and in-kind contributions: CIFAR
through the Canada CIFAR AI Chair, NSERC through the
Discovery Grant and an Alliance Grant with CSE and wirh ServiceNow
and DRDC, the Schmidt Sciences foundation through the
AI2050 Early Career Fellow program. Anvith Thudi is supported by a Vanier Fellowship from the Natural Sciences
and Engineering Research Council of Canada. Resources used
in preparing this research were provided, in part, by the
Province of Ontario, the Government of Canada through
CIFAR, and companies sponsoring the Vector Institute. We
thank the computing team at the University of Toronto’s
Computer Science Department for administering and procuring the compute infrastructure used for the experiments in
this paper.

%% file: sections/appendix.tex
\section{Appendix}

\subsection{Proof of Lemma~\ref{lem:asymptotic_improvement}}
\label{app:proofs}

\begin{proof}
    We first provide a specific data mixing problem (under the lemma assumptions) that requires private hyperparameter search to have $\epsilon = \Omega(k)$ for fixed excess error with constant probability. We then show, under the lemma assumptions, the reduced MixMin objective always has $\epsilon = O(\sqrt{k})$.

    To show worst-case scaling for private hyperparameter search, we need only find one "hard" example. Consider the case where the data mixture objective is $\|\lambda - \lambda^*\|_2^2$ for $\lambda^*$ in the relative interior of the simplex (i.e., $\lambda^*_i \neq 0$ for any $i$). This corresponds to taking $L$ to be MSE, $f_i(x) = e_i$ for all $x$, and $y = \lambda^*$ always (which hence defines the target distribution). Then for random search to achieve $\alpha$ error, we need $\|\lambda - \lambda^*\|_2 \leq \sqrt{\alpha}$. Note the probability density function for uniform sampling on the simplex is $(k-1)!$ (where $k$ is the number of public datasets we are mixing). Thus the probability of a random mixture being in $S_{\alpha} = \{\lambda : \|\lambda - \lambda^*\|_2 \leq \sqrt{\alpha} , \lambda \in \Delta^{k}\}$ is $(k-1)! \, Vol(S_{\alpha}) = \frac{(k-1)! \, \pi^{(k-1)/2}}{\Gamma((k-1)/2+1)} \alpha^{(k-1)/2}$ for $\alpha$ sufficiently small. This follows from noting that in a sufficiently small ball around any point in the interior of the simplex, the simplex is a hyperplane defined by $\langle \mathbf{1},\lambda\rangle = 1$, and the intersection of a hyperplane around a $k$-dim sphere centered at a point on it is yet another sphere of dimension $k-1$ and the same radius (by rotational symmetry).

    Now recall Poisson hyperparameter search for mixture finding first samples a number $r$ following the Poisson distribution, and then trains on $r$ random mixtures and picks the best. As $\alpha \rightarrow 0$, we have the probability that the closest of the $r$ random mixtures is in $S_{\alpha}$, where $r$ is also a random variable, is $\Theta\!\left(\mathbb{E}r \, \frac{(k-1)! \, \pi^{(k-1)/2}}{\Gamma((k-1)/2+1)} \, \alpha^{(k-1)/2}\right)$ (see Appendix A.2.1 in \citet{papernot2022hyperparameter} for the first-order approximation which becomes tight as $\alpha \rightarrow 0$). Hence for the probability to not vanish with $k$, we need $\mathbb{E}r = \Omega\!\left(\left(\tfrac{1}{\alpha}\right)^{(k-1)/2}\right)$.

    Plugging this into Theorem 6 in \citet{papernot2022hyperparameter} shows the $\epsilon$ guarantee for $(\epsilon,\delta)$ grows $\propto \log(\mathbb{E} r) = \Omega(k)$.

    In contrast, the MixMin reduction presents a Lipschitz  DP-SCO objective. It is known that an achievable expected excess error vs. privacy trade-off is $\alpha = O(\frac{1}{\sqrt{n}} + \frac{\sqrt{k\ln(1/\delta)}}{\epsilon n})$~\citep{bassily2019private,feldman2020private,asi2021private}~\footnote{In fact, in the specific case of generalized linear losses, this can be far better in dimension dependence~\citep{bassily2021differentially}}. We hence need $\epsilon = O(\sqrt{k}/(n\alpha)) = O(\sqrt{k}\cdot\alpha) = O(\sqrt{k})$, noting $n=\Omega(1/\alpha^2)$ by assumption and $\alpha\leq 1$. Note that by Lipschitzness and the fact the simplex has $\ell_2$ diameter bounded by 1, the loss of the mixture is always bounded by a constant independent of $k$, and so always has at least constant probability around the expectation (independent of $k$), and we can conclude $O(\sqrt{k})$ is sufficient for excess error less than $\alpha$ with constant probability.
\end{proof}

\subsection{Guidance on choosing the mixture-selection budget \texorpdfstring{$\rho$}{rho}}
\label{app:choosing-rho}

DP-MixMin sets aside a fraction $\rho$ of the private data to estimate the mixture
weights. In the main text we validate a choice of $\rho$ \emph{a posteriori}, by
checking that the privately recovered mixture $\boldsymbol{\lambda}_{\mathrm{DP}}$ has
high cosine similarity to the non-private solution $\boldsymbol{\lambda}^\star$
(Tables~\ref{tab:vision-weights-all}--\ref{tab:weights-pneumonia}). Here we give a
complementary \emph{a-priori} guideline, derived directly from our error bound
(Lemma~\ref{lem:asymptotic_improvement}), that tells a practitioner how large $\rho$
must be before running anything.

\paragraph{Setup.} We measure quality by the excess error $\alpha$ of
\cref{eq:excess-error}, here instantiated with the empirically trained proxies
$\hat f_i$ in place of the Bayes-optimal $f_i$. Write $T_\lambda = \rho N$ for the number
of private examples used for mixing, with $N$ the private set size and $k$ the number of
public sources.

\paragraph{An a-priori bound on \texorpdfstring{$\rho$}{rho}.}
The DP-SCO trade-off for this objective (Lemma~\ref{lem:asymptotic_improvement}) is
$\alpha = O\!\big(1/\sqrt{T_\lambda} + \sqrt{k\ln(1/\delta)}/(\epsilon\,T_\lambda)\big)$.
Requiring each term to contribute at most $\alpha/2$ yields a sufficient budget:
\begin{align}
    &\alpha = O\!\left(\frac{1}{\sqrt{T_\lambda}} + \frac{\sqrt{k\ln(1/\delta)}}{\epsilon\, T_\lambda}\right), \\
    &\text{Requiring each term} \leq \alpha/2: \quad
      \frac{1}{\sqrt{T_\lambda}}\leq\frac{\alpha}{2} \;\Rightarrow\; T_\lambda\geq\frac{4}{\alpha^2}, \\
    &\qquad \frac{\sqrt{k\ln(1/\delta)}}{\epsilon T_\lambda}\leq\frac{\alpha}{2}
      \;\Rightarrow\; T_\lambda\geq\frac{2\sqrt{k\ln(1/\delta)}}{\epsilon\alpha}, \\
    &\therefore\quad
      T_\lambda \;\geq\; \max\!\left(\frac{4}{\alpha^2},\ \frac{2\sqrt{k\ln(1/\delta)}}{\epsilon\alpha}\right),
      \qquad \rho = \frac{T_\lambda}{N}.
    \label{eq:rho-rule}
\end{align}
This is a guideline rather than a single universal constant. The required budget is
governed by two regimes: a \emph{statistics-limited} regime where $T_\lambda \gtrsim 4/\alpha^2$
(the cost of estimating the objective from finite data), and a \emph{privacy-limited}
regime where $T_\lambda \gtrsim 2\sqrt{k\ln(1/\delta)}/(\epsilon\alpha)$ (the cost of the DP
noise). The two cross over at $\epsilon^\star = \tfrac{\alpha}{2}\sqrt{k\ln(1/\delta)}$: for
$\epsilon > \epsilon^\star$ the statistical term dominates and $\rho$ is independent of the
privacy budget; for $\epsilon < \epsilon^\star$ the required $\rho$ grows as $1/\epsilon$.

\paragraph{Reading the guideline at the budgets we use.}
For $\epsilon \geq 1$ we operate in the statistics-limited regime (e.g. for our vision
setting $k=14$, $\delta=10^{-5}$ give $\epsilon^\star \approx 0.63 < 1$), so the achievable
error is of order $\alpha \approx 1/\sqrt{T_\lambda}$. Evaluated at $\rho = 0.1$, this single
relation reproduces both of our empirical operating points:
\begin{itemize}
    \item \textbf{Vision (per-example DP).} $\rho=0.1$ gives $T_\lambda \approx 8{,}970$
    examples, so $\alpha \approx 1/\sqrt{T_\lambda} \approx 0.01$, consistent with the strong
    recovery we observe (Table~\ref{tab:vision-weights-all}: cosine similarity $0.97$ at
    $\epsilon=1$ across the 14 NIH tasks).
    \item \textbf{Language (example-level DP).} $\rho=0.1$ gives $\approx 2.8$M OpenWebMath
    tokens, so $\alpha \approx 1/\sqrt{T_\lambda} \approx 0.001$, consistent with the
    near-exact recovery we observe (Table~\ref{tab:owm-weights}: cosine similarity $1.00$
    at $\epsilon=1$).
\end{itemize}
The achievable $\alpha$ therefore tracks data availability: with millions of tokens,
language reaches a far tighter $\alpha$ than vision, whose mixture-learning splits contain
only thousands of examples. In both cases $\rho=0.1$ clears the threshold of
\cref{eq:rho-rule}, which is why it is a robust default; a practitioner facing a new
dataset can invert the same relation to size $\rho$ for a target $\alpha$.

\subsection{Extended Experimental Details}

\subsubsection{Image Classification}
\label{app:img_exp_details}

\textbf{DP-MixMin Implementation.} Proxy models were trained for 4 and 10 epochs respectively for CheXpert MIMIC-CXR at learning rate $10^{-4}$ and batch size 256 (the largest batch size that fit on a single L40 GPU). DP-MixMin used clipping norm $13.0$, selected from $\{1.0, 2.0, 5.0, 10.0, 13.0, 15.0, 20.0\}$ with a batch size of 1000 selected from $\{500, 1000, 1500\}$, and learning rate chosen from $\{0.5, 1.0, 2.0\}$.

\textbf{Training details.} Pretraining and private fine-tuning on the ResNet18s both used batch size 200 (the largest batch that fit on an L40). Given mixture weights $\lambda \in \Delta^{14}$, we pretrained the ResNet-18 by minimizing the binary cross-entropy loss between its scalar output and the mixture-weighted target $\sum_{c=1}^{14} \lambda_c \, y_c$, where $y_c$ was the per-disease binary label.~\footnote{Note that this is the importance weighting defined by the mixture, and it reduces variance relative to sampling $y_c$ according to $\lambda_c$.} %
We evaluated the pretrained ResNet-18 at 3, 6, and 9 epochs and selected the checkpoint with the highest downstream validation AUC before fine-tuning. Fine-tuning ran for 36{,}000 steps, with learning rate $10^{-4}$ and DP-SGD clipping norm $2.0$. All hyperparameters were tuned once on a single pilot configuration---Atelectasis %
classification as the NIH downstream task with CheXpert pretraining at $\varepsilon = 1$---and then held fixed across all 14 tasks, both pretraining sources, all seeds, and all three privacy budgets $\varepsilon \in \{1, 2, 5\}$ at $\delta = 10^{-5}$. 

\subsubsection{Language Modeling}
\label{app:lang_exp_details}

\textbf{DP-MixMin Implementation.} For Enron, DP-MixMin was run for $200$ steps with per-sample clipping norm $0.01$ and step size $5.0$. For OpenWebMath, DP-MixMin was run for $200$ steps with clipping norm $0.2$ and step size $5.0$. 

\textbf{Training details.} Pretraining uses GPT-2 Medium from random initialization for $7.1\!\times\!10^9$ tokens at context length $1024$, effective batch size $64$, and learning rate $5\cdot 10^{-3}$. Enron fine-tuning uses $5$ users per gradient step with $20$ accumulation steps (effective batch of $100$ users), G\_ULS of $15$, context length $512$, and learning rate $5\cdot 10^{-4}$ (chosen as the largest stable rate from $\{5\cdot 10^{-5}, 5\cdot 10^{-4}, 10^{-3}, 5\cdot 10^{-3}\}$). Per-device batch sizes were set to the largest power of $2$ that fit in available GPU memory, with gradient accumulation used to reach the target effective batch. Private Enron fine-tuning uses clipping norm $0.5$ and a token budget of $25$M. Private OpenWebMath fine-tuning used batch size $500$ at learning rate $5\cdot 10^{-3}$ for pretraining and a $50$M-token fine-tuning budget. All results are averaged over $3$ seeds. 

\subsection{On the infeasibility of GSD in our setting}
\label{app:gsd}

\citet{gu2025choosingpublicdatasetsprivate} propose Gradient Subspace Distance (GSD), which scores a candidate public source by training a model on randomly labeled examples from both the public source and the private task, then measuring the distance between their gradient subspaces. The lowest-distance source is selected.

GSD is inapplicable to our setting in Section~\ref{sec:vision_experiments}. We binarize each of CheXpert and MIMIC-CXR along their 14 disease axes (Section~\ref{sec:vision_experiments}), so all 14 candidate sources within a corpus share the same image set $\mathcal{X}$ and differ only in per-image labels. With random labels, gradient subspaces depend only on $\mathcal{X}$, so any two sources sharing inputs receive a GSD of zero, and the candidates DP-MixMin must rank are precisely such sources. GSD is, by construction, only informative when candidates have distinct input distributions, for example disjoint vision benchmarks of the kind on which~\citet{gu2025choosingpublicdatasetsprivate} evaluate it.

The coarsest GSD variant with signal in our setting selects between \emph{whole corpora}: CheXpert versus MIMIC-CXR. Either choice is dominated by DP-MixMin, which improves over uniform pretraining on each corpus independently (Table~\ref{tab:group-gains}). Even granting GSD an oracle picker over the two corpora, the resulting pipeline still falls short of DP-MixMin.

\subsection{Robustness to Proxy Model Quality}
\label{app:proxy-robustness}

Lemma~\ref{lem:asymptotic_improvement} assumes proxy models near Bayes optimal. To check how much this matters in practice, we varied the number of epochs used to train the CheXpert proxy (1 to 6, versus 4 in our main experiments) and ran DP-MixMin for NIH Atelectasis at $\epsilon = 1$. Results were similar across all proxy budgets (Table~\ref{tab:proxy-epochs}), and all remained well above uniform pretraining ($0.707$ AUC), suggesting small changes in proxy quality have little effect on DP-MixMin's downstream performance.

\begin{table}[h]
\centering
\small
\caption{\textbf{DP-MixMin is insensitive to proxy training budget.} NIH Atelectasis
AUC (mean $\pm$ std over 3 seeds) with CheXpert pretraining at $\epsilon = 1$.}
\label{tab:proxy-epochs}
\begin{tabular}{lc}
\toprule
\textbf{Proxy epochs} & \textbf{Atelectasis AUC} \\
\midrule
1 & $0.744 \pm 0.005$ \\
2 & $0.745 \pm 0.005$ \\
4 (main experiments) & $0.742 \pm 0.006$ \\
6 & $0.749 \pm 0.003$ \\
\bottomrule
\end{tabular}
\end{table}

\subsection{Empirically Validating Lemma~\ref{lem:asymptotic_improvement}}
\label{app:sqrtk}

To show that Lemma~\ref{lem:asymptotic_improvement} holds empirically under its assumptions, we analyze the excess error as a function of the number of public sources being mixed, $k$, and trace its trajectory as $k$ increases.

\paragraph{Setup.} We train ResNet-50 proxies on $k$ ImageNet-1k classes~\citep{imagenet15russakovsky} ($k\in\{2, 4, 8, 16, 64, 128, 256, 512,1000\}$, the $k$ classes being the $k$ sources) and run DP-MixMin on the first 10 binarized CIFAR-100 classes~\citep{Krizhevsky09learningmultiple} at $\varepsilon{=}1$, $\delta{=}10^{-5}$ over all $N{=}50{,}000$ images.

\begin{figure}[h]
  \centering
  \includegraphics[width=0.5\textwidth]{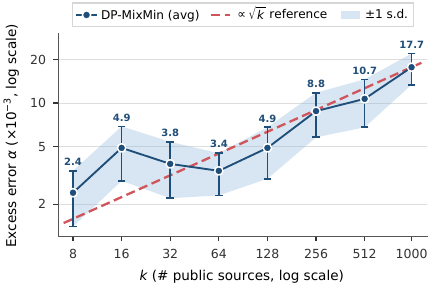}
  \caption{\textbf{Excess error scales roughly as $\sqrt{k}$ for large $k$.}
  Average excess error of DP-MixMin over the 10 CIFAR-100 tasks as the number
  of public sources $k$ increases (both axes log scale). Error bars and the
  shaded band show $\pm1$ standard deviation across tasks. The dashed red line
  is a $\sqrt{k}$ reference; the fitted slope for $k \geq 32$ is $0.49$,
  consistent with the predicted $\sqrt{k}$ rate.}
  \label{fig:sqrtk}
\end{figure}

\paragraph{Metric.} We report the excess error $\alpha = L(\boldsymbol{\lambda}_{\mathrm{DP}}) - L(\boldsymbol{\lambda}^\star)$ (\cref{eq:excess-error}) against the non-private optimum $\boldsymbol{\lambda}^\star$, averaged over 5 seeds and the first 10 binarized CIFAR100 tasks 

\paragraph{Result.} Excess error grows sub-linearly in $k$ (Table~\ref{tab:sqrtk-perclass}). A log-log fit over $k\geq32$ gives slope $0.49$, matching the $\sqrt{k}$ prediction ($0.5$) and well below the linear cost of private hyperparameter search.
\

\begin{table}[h]
\centering \scriptsize \setlength{\tabcolsep}{3pt}
\caption{\textbf{DP-MixMin excess error $\alpha=L(\boldsymbol{\lambda}_{\mathrm{DP}})-L(\boldsymbol{\lambda}^\star)$ across the number of public sources $k$} (ImageNet-1k $\to$ CIFAR-100, $\varepsilon{=}1$, $\delta{=}10^{-5}$), for all 10 CIFAR-100 one-vs-rest tasks. Mean $\pm$ std over 5 seeds, in units of $10^{-3}$. The last column (\textbf{Avg}) averages over all 5 seeds and 10 tasks.}
\label{tab:sqrtk-perclass}
\resizebox{\textwidth}{!}{%
\begin{tabular}{r *{10}{c} @{\hskip 10pt} c}
\toprule
$k$ & class 0 & class 1 & class 2 & class 3 & class 4 & class 5 & class 6 & class 7 & class 8 & class 9 & \textbf{Avg} \\
\midrule
2 & $0.1{\scriptstyle\pm0.1}$ & $0.1{\scriptstyle\pm0.1}$ & $0.1{\scriptstyle\pm0.2}$ & $0.1{\scriptstyle\pm0.2}$ & $0.1{\scriptstyle\pm0.1}$ & $0.1{\scriptstyle\pm0.2}$ & $0.1{\scriptstyle\pm0.1}$ & $0.1{\scriptstyle\pm0.2}$ & $0.1{\scriptstyle\pm0.1}$ & $0.1{\scriptstyle\pm0.2}$ & $\mathbf{0.1{\scriptstyle\pm0.1}}$ \\
4 & $0.1{\scriptstyle\pm0.0}$ & $0.0{\scriptstyle\pm0.0}$ & $0.0{\scriptstyle\pm0.0}$ & $0.1{\scriptstyle\pm0.0}$ & $0.0{\scriptstyle\pm0.0}$ & $0.0{\scriptstyle\pm0.0}$ & $0.0{\scriptstyle\pm0.0}$ & $0.0{\scriptstyle\pm0.0}$ & $0.0{\scriptstyle\pm0.0}$ & $0.1{\scriptstyle\pm0.0}$ & $\mathbf{0.0{\scriptstyle\pm0.0}}$ \\
8 & $1.1{\scriptstyle\pm0.1}$ & $1.4{\scriptstyle\pm0.2}$ & $1.3{\scriptstyle\pm0.2}$ & $3.7{\scriptstyle\pm0.2}$ & $3.3{\scriptstyle\pm0.2}$ & $2.1{\scriptstyle\pm0.1}$ & $2.3{\scriptstyle\pm0.2}$ & $2.8{\scriptstyle\pm0.1}$ & $4.1{\scriptstyle\pm0.2}$ & $2.1{\scriptstyle\pm0.1}$ & $\mathbf{2.4{\scriptstyle\pm1.0}}$ \\
16 & $5.7{\scriptstyle\pm0.4}$ & $7.0{\scriptstyle\pm0.2}$ & $3.3{\scriptstyle\pm0.4}$ & $2.9{\scriptstyle\pm0.3}$ & $2.4{\scriptstyle\pm0.4}$ & $7.0{\scriptstyle\pm0.6}$ & $4.2{\scriptstyle\pm0.3}$ & $2.8{\scriptstyle\pm0.4}$ & $5.8{\scriptstyle\pm0.5}$ & $7.8{\scriptstyle\pm0.5}$ & $\mathbf{4.9{\scriptstyle\pm2.0}}$ \\
32 & $2.8{\scriptstyle\pm0.2}$ & $6.7{\scriptstyle\pm0.5}$ & $3.4{\scriptstyle\pm0.3}$ & $2.0{\scriptstyle\pm0.1}$ & $1.9{\scriptstyle\pm0.1}$ & $3.4{\scriptstyle\pm0.2}$ & $3.7{\scriptstyle\pm0.2}$ & $2.7{\scriptstyle\pm0.3}$ & $5.8{\scriptstyle\pm0.3}$ & $5.3{\scriptstyle\pm0.3}$ & $\mathbf{3.8{\scriptstyle\pm1.6}}$ \\
64 & $3.8{\scriptstyle\pm0.2}$ & $2.1{\scriptstyle\pm0.3}$ & $3.7{\scriptstyle\pm0.2}$ & $3.4{\scriptstyle\pm0.3}$ & $2.6{\scriptstyle\pm0.3}$ & $4.4{\scriptstyle\pm0.3}$ & $4.4{\scriptstyle\pm0.2}$ & $5.0{\scriptstyle\pm0.5}$ & $1.8{\scriptstyle\pm0.1}$ & $2.3{\scriptstyle\pm0.3}$ & $\mathbf{3.4{\scriptstyle\pm1.1}}$ \\
128 & $5.9{\scriptstyle\pm0.5}$ & $1.6{\scriptstyle\pm0.3}$ & $5.7{\scriptstyle\pm0.3}$ & $5.2{\scriptstyle\pm0.4}$ & $6.4{\scriptstyle\pm0.4}$ & $2.7{\scriptstyle\pm0.2}$ & $7.4{\scriptstyle\pm0.2}$ & $6.8{\scriptstyle\pm1.0}$ & $3.8{\scriptstyle\pm0.4}$ & $3.6{\scriptstyle\pm0.7}$ & $\mathbf{4.9{\scriptstyle\pm1.9}}$ \\
256 & $12.2{\scriptstyle\pm0.7}$ & $2.7{\scriptstyle\pm0.3}$ & $9.2{\scriptstyle\pm0.3}$ & $10.6{\scriptstyle\pm0.3}$ & $11.5{\scriptstyle\pm0.3}$ & $4.6{\scriptstyle\pm0.4}$ & $10.9{\scriptstyle\pm0.2}$ & $10.5{\scriptstyle\pm0.7}$ & $7.9{\scriptstyle\pm0.6}$ & $8.3{\scriptstyle\pm0.9}$ & $\mathbf{8.8{\scriptstyle\pm3.0}}$ \\
512 & $8.6{\scriptstyle\pm0.4}$ & $6.7{\scriptstyle\pm0.7}$ & $4.5{\scriptstyle\pm0.5}$ & $16.2{\scriptstyle\pm0.3}$ & $10.1{\scriptstyle\pm0.6}$ & $10.3{\scriptstyle\pm0.4}$ & $13.7{\scriptstyle\pm0.4}$ & $15.7{\scriptstyle\pm0.9}$ & $13.9{\scriptstyle\pm0.5}$ & $7.0{\scriptstyle\pm1.0}$ & $\mathbf{10.7{\scriptstyle\pm3.9}}$ \\
1000 & $18.6{\scriptstyle\pm1.0}$ & $14.9{\scriptstyle\pm0.9}$ & $16.3{\scriptstyle\pm0.7}$ & $23.4{\scriptstyle\pm0.3}$ & $14.9{\scriptstyle\pm0.7}$ & $9.6{\scriptstyle\pm0.4}$ & $21.9{\scriptstyle\pm0.3}$ & $22.3{\scriptstyle\pm0.5}$ & $20.6{\scriptstyle\pm0.5}$ & $14.8{\scriptstyle\pm1.3}$ & $\mathbf{17.7{\scriptstyle\pm4.3}}$ \\
\bottomrule
\end{tabular}}
\end{table}

\subsection{Sequential vs.\ Parallel Composition}
\label{app:sequential}

Instead of splitting $D^{priv}$ into disjoint $D^{priv}_{\lambda}$ and $D^{priv}_{ft}$
(parallel composition), one can use all of $D^{priv}$ for both steps and split the
privacy budget, giving $(\epsilon_1 + \epsilon_2, \delta_1 + \delta_2)$-DP by sequential
composition~\citep{dwork2014algorithmic}. We compared the two on NIH with CheXpert
pretraining at total $\epsilon = 1$: sequential used $\epsilon_1 = 0.1$ and
$\epsilon_2 = 0.9$, and parallel used our main setting ($\rho = 0.1$,
$\epsilon_1 = \epsilon_2 = 1$). Neither was consistently better, and both were close to
non-private MixMin (Table~\ref{tab:sequential}). This is expected: DP noise in each step
scales roughly with $1/(\epsilon \cdot n)$ (Appendix~\ref{app:choosing-rho}), and with
$\epsilon_1 = \rho\,\epsilon$ and $\epsilon_2 = (1-\rho)\,\epsilon$, sequential
composition matches parallel composition's $\epsilon \cdot n$ in both steps, differing
only in using somewhat more data per step.

\begin{table}[h]
\centering \small
\caption{\textbf{Learned mixture is stable across proxy training budget.} Cosine similarity of the DP-MixMin mixture (early-stop, $\varepsilon{=}1$) at proxy epochs $\{1,2,6\}$ to the main $E{=}4$ mixture, per NIH class (mean $\pm$ std, 3 seeds). $E{=}4$ is the self-reference ($1.000$).}
\label{tab:proxy-epoch-cosine}
\begin{tabular}{l ccc}
\toprule
\textbf{Class} & \textbf{E1} & \textbf{E2} & \textbf{E6} \\
\midrule
Atelectasis        & $0.868 \pm 0.015$ & $0.986 \pm 0.003$ & $0.972 \pm 0.014$ \\
Consolidation      & $0.886 \pm 0.004$ & $0.941 \pm 0.035$ & $0.944 \pm 0.012$ \\
Infiltration       & $0.895 \pm 0.018$ & $0.873 \pm 0.046$ & $0.848 \pm 0.023$ \\
Pneumothorax       & $0.952 \pm 0.019$ & $0.976 \pm 0.003$ & $0.975 \pm 0.003$ \\
Edema              & $0.978 \pm 0.003$ & $0.965 \pm 0.003$ & $0.887 \pm 0.029$ \\
Emphysema          & $0.986 \pm 0.006$ & $0.988 \pm 0.002$ & $0.970 \pm 0.009$ \\
Fibrosis           & $0.993 \pm 0.005$ & $0.989 \pm 0.002$ & $0.916 \pm 0.024$ \\
Effusion           & $0.870 \pm 0.020$ & $0.884 \pm 0.006$ & $0.938 \pm 0.006$ \\
Pneumonia          & $0.975 \pm 0.003$ & $0.984 \pm 0.004$ & $0.889 \pm 0.008$ \\
Pleural Thickening & $0.982 \pm 0.011$ & $0.988 \pm 0.001$ & $0.931 \pm 0.013$ \\
Cardiomegaly       & $0.904 \pm 0.004$ & $0.975 \pm 0.007$ & $0.726 \pm 0.067$ \\
Nodule             & $0.967 \pm 0.010$ & $0.972 \pm 0.012$ & $0.932 \pm 0.031$ \\
Mass               & $0.934 \pm 0.021$ & $0.982 \pm 0.004$ & $0.974 \pm 0.009$ \\
Hernia             & $0.983 \pm 0.001$ & $0.998 \pm 0.000$ & $0.914 \pm 0.007$ \\
\bottomrule
\end{tabular}
\end{table}

\begin{table}[h]
\centering
\small
\caption{\textbf{Sequential and parallel composition perform similarly.} NIH AUC
(mean $\pm$ std over 3 seeds), CheXpert pretraining, total $\epsilon = 1$.}
\label{tab:sequential}
\begin{tabular}{lccc}
\toprule
\textbf{Class} & \textbf{Sequential} & \textbf{Parallel} & \textbf{Non-private} \\
\midrule
Atelectasis & $0.740 \pm 0.010$ & $0.747 \pm 0.005$ & $0.746 \pm 0.006$ \\
\bottomrule
\end{tabular}
\end{table}

\subsection{Auditing DP-MixMin's Privacy Guarantee}
\label{app:audit}

\paragraph{Threat model.}
We consider example-level $(\epsilon,\delta)$-differential privacy, where neighboring
private datasets differ by the addition or removal of one example. We partition the
private dataset as $D^{\text{priv}} = D_{\text{mix}} \sqcup D_{\text{ft}}$, where
$D_{\text{mix}}$ is used to privately learn the mixture weights $\lambda$, and
$D_{\text{ft}}$ is used for DP-SGD fine-tuning. Because the two mechanisms operate on
disjoint subsets, parallel composition gives an end-to-end guarantee of
$(\epsilon,\delta)$-DP. In our experiments, this is $(\epsilon,\delta) = (1, 10^{-5})$.

We assume a deliberately strong adversary who observes everything released by the
pipeline, including all public data, the algorithm and its hyperparameters, the learned
mixture weights $\lambda$, and the final model. The adversary additionally knows every
private example except the target example whose membership it seeks to infer.

\paragraph{Audit scope and methodology.}
We audit the component novel to DP-MixMin, $M_{\text{mix}}: D_{\text{mix}} \to \lambda$.
For an example in $D_{\text{mix}}$, all subsequent model outputs depend on that example
only through $\lambda$ and are therefore post-processing of $\lambda$. We consequently
audit the released mixture weights directly.

Following the empirical privacy-auditing framework and algorithms of
\citet{jagielski2020auditing}, we repeatedly run the mechanism on neighboring datasets and
estimate an empirical lower bound $\epsilon_{LB}$. Each audit configuration is evaluated
over 2000 independent runs. We use a mixture-finding split ratio of $\rho = 0.1$.
Public sources are the 14 CheXpert classes (proxy predictions). We run DP-MixMin once per
NIH target class, auditing 14 mixture-finding instances in total.

We evaluate two types of canaries. The \textbf{natural canary} is a real, unmodified
private training example whose membership the adversary tries to infer. This captures
realistic leakage of a genuine data point. The \textbf{max-clip canary} is a strong
construction. Its per-example gradient attains the clipping norm, the maximum influence
any single example can have under clipping, in a fixed direction aligned with the
mixture's most-weighted sources, since the released mixture is most sensitive to
perturbations of those sources.

\paragraph{Results.}
At the claimed privacy level $\epsilon=1$, the natural-canary audit gives
$\epsilon_{LB} = 0$ for all 14 NIH tasks. The stronger max-clipped-gradient canary gives
$\max_{\text{task}} \epsilon_{LB} = 0.131$. Thus, all measured private lower bounds satisfy
$\epsilon_{LB} \leq 0.131 < 1$.

When privacy noise is removed, the same audit obtains values as high as
$\epsilon_{LB} = 5.60$, demonstrating that it can detect substantial leakage when present.
Overall, we find no empirical evidence of a privacy violation. Full per-class results for both CheXpert and MIMIC-CXR pretraining are reported in
Tables~\ref{tab:audit-chexpert} and \ref{tab:audit-mimic}.

\subsection{User-Level DP-Mixmin}
User-level DP protects the contribution of an entire user rather than a single example: two neighboring datasets may differ in all examples contributed by one user~\citep{levy2021learning}. For Enron, we therefore partition the private mixture-learning split by user, 
$D_{\lambda}^{priv}=\bigsqcup_{u=1}^{N}D_u$, where $D_u$ contains the examples attributed to user $u$. At each DP-MixMin step, we sample a batch $\mathcal{U}_j$ of $B$ users, compute one gradient per user by averaging the loss over that user's sampled examples, clip this per-user gradient, and add Gaussian noise before updating the mixture weights. This differs from example-level DP-MixMin, where clipping and noise are applied to individual examples.

The parameter $G_{\mathrm{ULS}}$ is the user-level subsample size. For each sampled user $u$, we use
$k_u=\min(G_{\mathrm{ULS}}, |D_u|)$ examples, so users with many emails cannot dominate a single update, while users with fewer than $G_{\mathrm{ULS}}$ emails contribute all available examples. Because the gradient is averaged within each user before clipping, the sensitivity is controlled at the user level: the clipped object is the user's aggregate contribution, not an individual email.

\begin{algorithm}
\caption{\textcolor{blue}{User-Level DP}-MixMin}
\label{alg:user-wise-dp-mixmin}
\begin{algorithmic}[1]
\REQUIRE \textcolor{blue}{Target privacy guarantees $(\epsilon,\delta)$}, step size $\eta$, number of steps $n$, loss function $L$ (CE or MSE), private mixture-learning split $D_{\lambda}^{priv} \subseteq \mathcal{X} \times \mathcal{Y}$ \textcolor{blue}{partitioned by user as $D_{\lambda}^{priv} = \bigsqcup_{u=1}^{N} D_u$}, proxy models $\{\hat{f}_i : i = 1, \ldots, k\}$, \textcolor{blue}{user batch size $B$, user-level subsample size $G_{\mathrm{ULS}}$, clipping norm $C$}
\STATE Initialize $\lambda_i \leftarrow \tfrac{1}{k}$ for all $i = 1, \ldots, k$
\STATE Pre-compute $\{\hat{f}_i(x) : x \in D_{\lambda}^{priv}, \, i = 1, \ldots, k\}$
\FOR{$j = 1$ to $n$}
    \STATE \textcolor{blue}{Sample a batch $\mathcal{U}_j$ of $B$ users from $\{1, \ldots, N\}$}
    \STATE $\hat{f}_\lambda(x) \leftarrow \sum_{i=1}^{k} \lambda_i \hat{f}_i(x)$
    \FOR{\textcolor{blue}{$u \in \mathcal{U}_j$}}
        \STATE \textcolor{blue}{$k_u \leftarrow \min(G_{\mathrm{ULS}}, |D_u|)$}
        \STATE \textcolor{blue}{Sample $D_u^{(j)} \subseteq D_u$ with $|D_u^{(j)}| = k_u$}
        \STATE \textcolor{blue}{$\ell_u \leftarrow \tfrac{1}{k_u} \sum_{(x,y) \in D_u^{(j)}} L(\hat{f}_\lambda(x), y)$}
        \STATE \textcolor{blue}{$g_u \leftarrow \nabla_\lambda \ell_u$}
        \STATE \textcolor{blue}{$\tilde{g}_u \leftarrow \text{Clip}(g_u, C)$} \COMMENT{\textcolor{blue}{clip per-user gradients}}
    \ENDFOR
    \STATE $g_{\text{sum}} \leftarrow \textcolor{blue}{\sum_{u \in \mathcal{U}_j} \tilde{g}_u}$
    \STATE \textcolor{blue}{$z \sim \mathcal{N}(0, \sigma^2 I)$ \COMMENT{noise calibrated to $(\epsilon, \delta)$}}
    \STATE $\bar{g} \leftarrow \tfrac{1}{\textcolor{blue}{B}} (g_{\text{sum}} + z)$
    \STATE $\lambda_i \leftarrow \lambda_i \, e^{-\eta \bar{g}_i}$ for all $i = 1, \ldots, k$
    \STATE $\lambda \leftarrow \lambda \,/\, \sum_{i=1}^{k} \lambda_i$
\ENDFOR
\STATE Return $\lambda$
\end{algorithmic}
\end{algorithm}

\subsection{Random Search}
\label{app:rand_search}

Random Search is the standard hyperparameter optimization baseline across fields~\citep{bergstra2012random,papernot2022hyperparameter,thudi2026mixminfindingdatamixtures}: rather than searching the space of configurations in a structured way, one simply draws a fixed number of candidates independently at random, evaluates each on a held-out validation set, and returns the best performer. Despite its simplicity, Random Search is known to be competitive with, and often outperform, grid search and many model-based optimizers, particularly in moderate-to-high-dimensional spaces where only a few hyperparameters meaningfully affect performance~\citep{bergstra2012random}. Note that private hyperparameter search algorithms also do random search, but randomize the number of candidates to test~\citep{papernot2022hyperparameter}.

We adapted this baseline to mixture selection as follows. We sampled $m$ candidate mixture weights $\lambda_1, \dots, \lambda_m$ independently from the probability simplex $\Delta_k$, ran mixture-weighted public pretraining followed by downstream DP fine-tuning for each $\lambda_i$, and selected the best-performing candidate on the same held-out set $D_\lambda^{\text{priv}}$ that DP-MixMin used for its mixture-selection step. We controlled compute by choosing $m$ so that the total cost of training and evaluating all $m$ mixtures matched or exceeded the end-to-end compute of DP-MixMin, ensuring any gains over Random Search could not be attributed to a larger compute budget. The selection step itself was also implemented non-privately: our implementation of Random Search read $D_\lambda^{\text{priv}}$ directly when picking the winning candidate, paying no privacy cost for the selection step. Random Search was therefore a strong baseline along two axes: equal or greater compute, and no utility degradation from private selection.

\subsection{Tables and Figures}
\begin{table}[t]
  \caption{\textbf{Per-class final AUC on NIH ChestX-ray14 after transfer from CheXpert.} Each entry is the mean $\pm$ standard deviation of AUC over 3--5 seeds at $\delta = 10^{-5}$. Bold indicates the best method per (class, $\epsilon$) cell. DP-MixMin achieves the best macro AUC across all three privacy budgets, with absolute gains of $+0.029$ to $+0.037$ over uniform pretraining.}
  \label{tab:chexpert_to_nih_full}
  \centering
  \scriptsize
  \setlength{\tabcolsep}{3pt}
  \resizebox{\textwidth}{!}{%
  \begin{tabular}{l ccc ccc ccc ccc}
    \toprule
    & \multicolumn{3}{c}{\textbf{DP-MixMin}} & \multicolumn{3}{c}{\textbf{Uniform}} & \multicolumn{3}{c}{\textbf{Random Search}} & \multicolumn{3}{c}{\textbf{No Pretraining}} \\
    \cmidrule(lr){2-4} \cmidrule(lr){5-7} \cmidrule(lr){8-10} \cmidrule(lr){11-13}
    \textbf{Class} & $\epsilon{=}1$ & $\epsilon{=}2$ & $\epsilon{=}5$ & $\epsilon{=}1$ & $\epsilon{=}2$ & $\epsilon{=}5$ & $\epsilon{=}1$ & $\epsilon{=}2$ & $\epsilon{=}5$ & $\epsilon{=}1$ & $\epsilon{=}2$ & $\epsilon{=}5$ \\
    \midrule
    Atelectasis      & \textbf{0.740$\pm$0.012} & \textbf{0.747$\pm$0.005} & \textbf{0.747$\pm$0.007} & 0.707$\pm$0.013 & 0.724$\pm$0.009 & 0.736$\pm$0.008 & 0.724$\pm$0.009 & 0.733$\pm$0.011 & 0.739$\pm$0.010 & 0.679$\pm$0.006 & 0.696$\pm$0.008 & 0.715$\pm$0.006 \\
    Consolidation    & 0.742$\pm$0.014 & 0.745$\pm$0.017 & 0.748$\pm$0.014 & \textbf{0.745$\pm$0.015} & \textbf{0.749$\pm$0.014} & \textbf{0.752$\pm$0.013} & 0.734$\pm$0.008 & 0.740$\pm$0.011 & 0.742$\pm$0.010 & 0.710$\pm$0.006 & 0.716$\pm$0.010 & 0.725$\pm$0.010 \\
    Infiltration     & \textbf{0.671$\pm$0.004} & \textbf{0.674$\pm$0.005} & \textbf{0.678$\pm$0.002} & 0.668$\pm$0.004 & 0.670$\pm$0.003 & 0.672$\pm$0.003 & 0.664$\pm$0.004 & 0.670$\pm$0.003 & 0.675$\pm$0.003 & 0.663$\pm$0.004 & 0.672$\pm$0.003 & 0.680$\pm$0.002 \\
    Pneumothorax     & \textbf{0.780$\pm$0.007} & \textbf{0.788$\pm$0.002} & \textbf{0.798$\pm$0.004} & 0.697$\pm$0.008 & 0.728$\pm$0.008 & 0.753$\pm$0.009 & 0.674$\pm$0.005 & 0.701$\pm$0.009 & 0.726$\pm$0.007 & 0.650$\pm$0.009 & 0.684$\pm$0.005 & 0.723$\pm$0.005 \\
    Edema            & 0.829$\pm$0.017 & 0.836$\pm$0.014 & 0.845$\pm$0.013 & 0.829$\pm$0.006 & 0.835$\pm$0.007 & 0.838$\pm$0.008 & \textbf{0.844$\pm$0.008} & \textbf{0.850$\pm$0.008} & \textbf{0.857$\pm$0.009} & 0.804$\pm$0.012 & 0.816$\pm$0.010 & 0.829$\pm$0.010 \\
    Emphysema        & \textbf{0.683$\pm$0.028} & \textbf{0.715$\pm$0.022} & \textbf{0.725$\pm$0.016} & 0.632$\pm$0.015 & 0.652$\pm$0.010 & 0.688$\pm$0.016 & 0.608$\pm$0.008 & 0.637$\pm$0.006 & 0.664$\pm$0.009 & 0.566$\pm$0.037 & 0.592$\pm$0.027 & 0.631$\pm$0.013 \\
    Fibrosis         & \textbf{0.627$\pm$0.039} & \textbf{0.660$\pm$0.016} & \textbf{0.668$\pm$0.021} & 0.609$\pm$0.022 & 0.641$\pm$0.029 & 0.658$\pm$0.024 & 0.591$\pm$0.019 & 0.618$\pm$0.013 & 0.632$\pm$0.012 & 0.578$\pm$0.038 & 0.612$\pm$0.023 & 0.653$\pm$0.013 \\
    Effusion         & \textbf{0.848$\pm$0.007} & \textbf{0.850$\pm$0.002} & \textbf{0.852$\pm$0.006} & 0.819$\pm$0.006 & 0.830$\pm$0.009 & 0.833$\pm$0.007 & 0.823$\pm$0.009 & 0.835$\pm$0.003 & 0.839$\pm$0.003 & 0.759$\pm$0.004 & 0.784$\pm$0.004 & 0.807$\pm$0.002 \\
    Pneumonia        & 0.626$\pm$0.035 & 0.626$\pm$0.025 & 0.650$\pm$0.022 & 0.649$\pm$0.023 & \textbf{0.660$\pm$0.032} & \textbf{0.661$\pm$0.030} & 0.648$\pm$0.025 & 0.652$\pm$0.039 & \textbf{0.663$\pm$0.045} & 0.613$\pm$0.021 & 0.634$\pm$0.019 & 0.642$\pm$0.019 \\
    Pleural Thick.   & \textbf{0.655$\pm$0.008} & \textbf{0.668$\pm$0.009} & \textbf{0.676$\pm$0.027} & 0.639$\pm$0.027 & 0.652$\pm$0.022 & 0.670$\pm$0.017 & 0.604$\pm$0.009 & 0.629$\pm$0.009 & 0.643$\pm$0.009 & 0.582$\pm$0.014 & 0.593$\pm$0.010 & 0.610$\pm$0.013 \\
    Cardiomegaly     & \textbf{0.872$\pm$0.008} & \textbf{0.878$\pm$0.010} & \textbf{0.883$\pm$0.010} & 0.710$\pm$0.009 & 0.733$\pm$0.006 & 0.748$\pm$0.010 & 0.762$\pm$0.004 & 0.792$\pm$0.002 & 0.820$\pm$0.004 & 0.669$\pm$0.004 & 0.679$\pm$0.010 & 0.691$\pm$0.010 \\
    Nodule           & \textbf{0.574$\pm$0.009} & 0.566$\pm$0.017 & 0.584$\pm$0.015 & 0.550$\pm$0.009 & 0.564$\pm$0.007 & 0.582$\pm$0.004 & 0.546$\pm$0.022 & 0.564$\pm$0.018 & 0.583$\pm$0.017 & 0.563$\pm$0.017 & \textbf{0.578$\pm$0.013} & \textbf{0.584$\pm$0.015} \\
    Mass             & \textbf{0.674$\pm$0.009} & \textbf{0.688$\pm$0.009} & \textbf{0.700$\pm$0.012} & 0.631$\pm$0.010 & 0.648$\pm$0.009 & 0.666$\pm$0.005 & 0.619$\pm$0.009 & 0.634$\pm$0.006 & 0.652$\pm$0.007 & 0.550$\pm$0.023 & 0.571$\pm$0.018 & 0.593$\pm$0.007 \\
    Hernia           & \textbf{0.583$\pm$0.054} & \textbf{0.624$\pm$0.054} & \textbf{0.655$\pm$0.044} & 0.499$\pm$0.043 & 0.531$\pm$0.042 & 0.551$\pm$0.048 & 0.498$\pm$0.036 & 0.523$\pm$0.027 & 0.538$\pm$0.045 & 0.517$\pm$0.022 & 0.551$\pm$0.050 & 0.561$\pm$0.047 \\
    \midrule
\textbf{Macro AUC}             & \textbf{0.707} & \textbf{0.719} & \textbf{0.729} & 0.670 & 0.687 & 0.701 & 0.667 & 0.684 & 0.698 & 0.636 & 0.656 & 0.675 \\
\textbf{$\Delta$ vs.\ Uniform} & $+0.037$       & $+0.032$       & $+0.029$       & ---   & ---   & ---   & $-0.003$ & $-0.003$ & $-0.003$ & $-0.034$ & $-0.031$ & $-0.026$ \\
    \bottomrule
  \end{tabular}%
  }
\end{table}

\begin{table}[t]
  \caption{\textbf{Per-class final AUC on NIH ChestX-ray14 after transfer from MIMIC-CXR.} Each entry is the mean $\pm$ standard deviation of AUC over 3--5 seeds at $\delta = 10^{-5}$. Bold indicates the best method per (class, $\epsilon$) cell. DP-MixMin achieves the best macro AUC across all three privacy budgets, with absolute gains of $+0.024$ to $+0.033$ over uniform pretraining.}
  \label{tab:mimic_to_nih_full}
  \centering
  \scriptsize
  \setlength{\tabcolsep}{3pt}
  \resizebox{\textwidth}{!}{%
  \begin{tabular}{l ccc ccc ccc ccc}
    \toprule
    & \multicolumn{3}{c}{\textbf{DP-MixMin}} & \multicolumn{3}{c}{\textbf{Uniform}} & \multicolumn{3}{c}{\textbf{Random Search}} & \multicolumn{3}{c}{\textbf{No Pretraining}} \\
    \cmidrule(lr){2-4} \cmidrule(lr){5-7} \cmidrule(lr){8-10} \cmidrule(lr){11-13}
    \textbf{Class} & $\epsilon{=}1$ & $\epsilon{=}2$ & $\epsilon{=}5$ & $\epsilon{=}1$ & $\epsilon{=}2$ & $\epsilon{=}5$ & $\epsilon{=}1$ & $\epsilon{=}2$ & $\epsilon{=}5$ & $\epsilon{=}1$ & $\epsilon{=}2$ & $\epsilon{=}5$ \\
    \midrule
    Atelectasis      & \textbf{0.751$\pm$0.010} & 0.749$\pm$0.010 & 0.737$\pm$0.000 & 0.728$\pm$0.009 & 0.734$\pm$0.008 & 0.740$\pm$0.008 & 0.742$\pm$0.006 & \textbf{0.749$\pm$0.006} & \textbf{0.755$\pm$0.005} & 0.679$\pm$0.006 & 0.696$\pm$0.008 & 0.715$\pm$0.006 \\
    Consolidation    & \textbf{0.756$\pm$0.012} & \textbf{0.758$\pm$0.011} & \textbf{0.759$\pm$0.015} & 0.748$\pm$0.010 & 0.751$\pm$0.011 & 0.753$\pm$0.011 & 0.743$\pm$0.012 & 0.746$\pm$0.012 & 0.749$\pm$0.012 & 0.710$\pm$0.006 & 0.716$\pm$0.010 & 0.725$\pm$0.010 \\
    Infiltration     & \textbf{0.674$\pm$0.008} & \textbf{0.679$\pm$0.004} & \textbf{0.682$\pm$0.005} & 0.667$\pm$0.004 & 0.673$\pm$0.005 & 0.677$\pm$0.006 & 0.673$\pm$0.006 & 0.676$\pm$0.006 & 0.680$\pm$0.003 & 0.663$\pm$0.004 & 0.672$\pm$0.003 & 0.680$\pm$0.002 \\
    Pneumothorax     & \textbf{0.754$\pm$0.013} & \textbf{0.770$\pm$0.004} & \textbf{0.782$\pm$0.005} & 0.719$\pm$0.012 & 0.741$\pm$0.015 & 0.754$\pm$0.015 & 0.679$\pm$0.015 & 0.707$\pm$0.016 & 0.732$\pm$0.016 & 0.650$\pm$0.009 & 0.684$\pm$0.005 & 0.723$\pm$0.005 \\
    Edema            & \textbf{0.845$\pm$0.003} & \textbf{0.849$\pm$0.004} & \textbf{0.853$\pm$0.005} & 0.839$\pm$0.007 & 0.841$\pm$0.008 & 0.845$\pm$0.007 & 0.826$\pm$0.011 & 0.834$\pm$0.008 & 0.841$\pm$0.011 & 0.804$\pm$0.012 & 0.816$\pm$0.010 & 0.829$\pm$0.010 \\
    Emphysema        & \textbf{0.710$\pm$0.023} & \textbf{0.728$\pm$0.012} & \textbf{0.742$\pm$0.012} & 0.650$\pm$0.013 & 0.674$\pm$0.010 & 0.691$\pm$0.011 & 0.633$\pm$0.033 & 0.645$\pm$0.022 & 0.693$\pm$0.034 & 0.566$\pm$0.037 & 0.592$\pm$0.027 & 0.631$\pm$0.013 \\
    Fibrosis         & \textbf{0.641$\pm$0.019} & \textbf{0.653$\pm$0.019} & \textbf{0.680$\pm$0.028} & 0.614$\pm$0.024 & 0.638$\pm$0.026 & 0.637$\pm$0.018 & 0.587$\pm$0.006 & 0.620$\pm$0.004 & 0.644$\pm$0.010 & 0.578$\pm$0.038 & 0.612$\pm$0.023 & 0.653$\pm$0.013 \\
    Effusion         & \textbf{0.849$\pm$0.003} & \textbf{0.849$\pm$0.005} & \textbf{0.851$\pm$0.005} & 0.824$\pm$0.007 & 0.830$\pm$0.005 & 0.833$\pm$0.006 & 0.813$\pm$0.003 & 0.822$\pm$0.004 & 0.829$\pm$0.006 & 0.759$\pm$0.004 & 0.784$\pm$0.004 & 0.807$\pm$0.002 \\
    Pneumonia        & \textbf{0.681$\pm$0.023} & \textbf{0.677$\pm$0.027} & 0.674$\pm$0.020 & 0.666$\pm$0.029 & 0.664$\pm$0.041 & \textbf{0.682$\pm$0.031} & 0.645$\pm$0.029 & 0.655$\pm$0.029 & 0.667$\pm$0.029 & 0.613$\pm$0.021 & 0.634$\pm$0.019 & 0.642$\pm$0.019 \\
    Pleural Thick.   & \textbf{0.683$\pm$0.013} & \textbf{0.688$\pm$0.017} & \textbf{0.704$\pm$0.022} & 0.664$\pm$0.014 & 0.676$\pm$0.011 & 0.684$\pm$0.010 & 0.621$\pm$0.013 & 0.641$\pm$0.005 & 0.663$\pm$0.012 & 0.582$\pm$0.014 & 0.593$\pm$0.010 & 0.610$\pm$0.013 \\
    Cardiomegaly     & \textbf{0.804$\pm$0.025} & \textbf{0.816$\pm$0.016} & \textbf{0.835$\pm$0.009} & 0.711$\pm$0.007 & 0.765$\pm$0.026 & 0.782$\pm$0.027 & 0.749$\pm$0.012 & 0.770$\pm$0.019 & 0.801$\pm$0.017 & 0.669$\pm$0.004 & 0.679$\pm$0.010 & 0.691$\pm$0.010 \\
    Nodule           & \textbf{0.589$\pm$0.012} & \textbf{0.591$\pm$0.016} & \textbf{0.598$\pm$0.008} & 0.552$\pm$0.013 & 0.569$\pm$0.012 & 0.585$\pm$0.009 & 0.539$\pm$0.008 & 0.548$\pm$0.009 & 0.559$\pm$0.010 & 0.563$\pm$0.017 & 0.578$\pm$0.013 & 0.584$\pm$0.015 \\
    Mass             & \textbf{0.690$\pm$0.008} & \textbf{0.707$\pm$0.005} & \textbf{0.719$\pm$0.009} & 0.635$\pm$0.006 & 0.654$\pm$0.004 & 0.667$\pm$0.007 & 0.597$\pm$0.014 & 0.628$\pm$0.017 & 0.653$\pm$0.017 & 0.550$\pm$0.023 & 0.571$\pm$0.018 & 0.593$\pm$0.007 \\
    Hernia           & \textbf{0.546$\pm$0.048} & \textbf{0.620$\pm$0.003} & \textbf{0.604$\pm$0.058} & 0.495$\pm$0.024 & 0.536$\pm$0.053 & 0.551$\pm$0.033 & 0.527$\pm$0.056 & 0.525$\pm$0.040 & 0.580$\pm$0.053 & 0.517$\pm$0.022 & 0.551$\pm$0.050 & 0.561$\pm$0.047 \\
    \midrule
\textbf{Macro AUC}             & \textbf{0.712} & \textbf{0.724} & \textbf{0.730} & 0.679 & 0.696 & 0.706 & 0.670 & 0.683 & 0.703 & 0.636 & 0.656 & 0.675 \\
    \textbf{$\Delta$ vs.\ Uniform} & $+0.033$       & $+0.028$       & $+0.024$       & ---   & ---   & ---   & $-0.010$ & $-0.013$ & $-0.003$ & $-0.044$ & $-0.041$ & $-0.031$ \\
    \bottomrule

  \end{tabular}%
  }
\end{table}

\begin{figure}[t]
  \centering
  \includegraphics[width=0.65\linewidth]{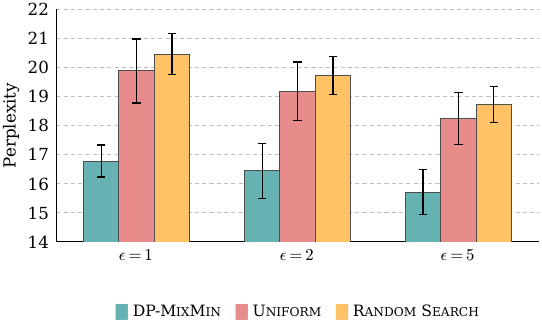}
  \caption{\textbf{DP-MixMin beats both uniform-mixture and random-search Common Pile pretraining from scratch on Enron email completion across all three privacy budgets, with the largest gain at the tightest budget.} Test-set perplexity after DP fine-tuning on Enron ($\delta{=}10^{-5}$) at $\epsilon \in \{1, 2, 5\}$. All three methods pretrain GPT-2 medium from random initialization on Common Pile and are then DP-finetuned on Enron under identical settings. DP-MixMin reduces test perplexity by roughly $16\%/14\%/14\%$ over uniform and $18\%/17\%/16\%$ over random-mixture pretraining at $\epsilon \in \{1,2,5\}$.}
  \label{fig:enron_perplexity_comparison}
\end{figure}

\begin{figure}[t]
  \centering
  \includegraphics[width=0.65\linewidth]{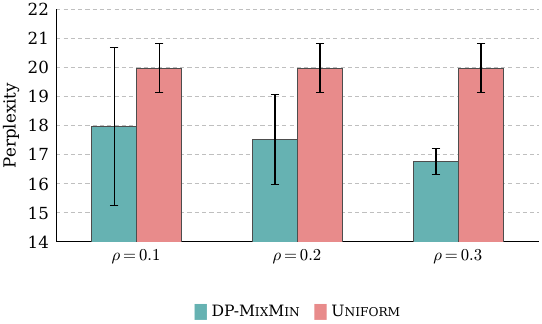}
  \caption{\textbf{DP-MixMin consistently outperforms uniform Common Pile pretraining on Enron at $\epsilon{=}1$ across all proxy-split fractions, with larger fractions yielding both lower perplexity and tighter cross-seed variance.} Test-set perplexity after DP fine-tuning on Enron ($\delta{=}10^{-5}$) at $\epsilon{=}1$, varying the proxy fraction $\rho \in \{0.1, 0.2, 0.3\}$ used by MixMin to estimate domain weights. The uniform baseline is independent of $\rho$ and is shown for reference at all three values. DP-MixMin reduces test perplexity by $10\%/12\%/16\%$ over uniform at $\rho \in \{0.1, 0.2, 0.3\}$, with $\rho{=}0.3$ giving the most stable mixture (cross-seed $\sigma$ drops from $2.72 \to 0.45$).}
  \label{fig:enron_perc_ablation}
\end{figure}

\begin{table}[t]
  \caption{\textbf{NIH ChestX-ray14 class distribution and correspondence to CheXpert/MIMIC-CXR labels.} The \textbf{Match} column indicates whether each NIH class has a direct (\emph{Yes}), \emph{Partial}, or no (\emph{None}) equivalent in CheXpert/MIMIC-CXR.}
  \label{tab:nih-chexpert-mapping}
  \centering
  \small
  \setlength{\tabcolsep}{5pt}
  \begin{tabular}{@{}c l r r c l@{}}
    \toprule
    \textbf{NIH Idx} & \textbf{Class Name} & \textbf{\# Pos.} & \textbf{\% Pos.} & \textbf{Match} & \textbf{CheXpert/MIMIC-CXR Label} \\
    \midrule
    0  & Atelectasis        & 11{,}559 & 10.31\% & Yes     & Atelectasis \\
    1  & Consolidation      &  4{,}667 &  4.16\% & Yes     & Consolidation \\
    2  & Infiltration       & 19{,}894 & 17.75\% & Partial & Lung Opacity \\
    3  & Pneumothorax       &  5{,}302 &  4.73\% & Yes     & Pneumothorax \\
    4  & Edema              &  2{,}303 &  2.05\% & Yes     & Edema \\
    5  & Emphysema          &  2{,}516 &  2.24\% & None    & --- \\
    6  & Fibrosis           &  1{,}686 &  1.50\% & None    & --- \\
    7  & Effusion           & 13{,}317 & 11.88\% & Yes     & Pleural Effusion \\
    8  & Pneumonia          &  1{,}431 &  1.28\% & Yes     & Pneumonia \\
    9  & Pleural Thickening &  3{,}385 &  3.02\% & Partial & Pleural Other \\
    10 & Cardiomegaly       &  2{,}776 &  2.48\% & Yes     & Cardiomegaly \\
    11 & Nodule             &  6{,}331 &  5.65\% & Partial & Lung Lesion \\
    12 & Mass               &  5{,}782 &  5.16\% & Partial & Lung Lesion \\
    13 & Hernia             &     227  &  0.20\% & None    & --- \\
    \bottomrule
  \end{tabular}
\end{table}

\begin{table}[t]
  \caption{\textbf{Average AUC gains of DP-MixMin over Uniform pretraining, grouped by class type.} Group~A: well-represented classes with a direct CheXpert match (Atelectasis, Cardiomegaly, Effusion, Pneumonia, Pneumothorax, Consolidation, Edema). Group~B: rare or unmatched classes (Infiltration, Mass, Nodule, Emphysema, Fibrosis, Pleural Thickening, Hernia); see Table~\ref{tab:nih-chexpert-mapping}. Each cell is the within-group mean of per-class absolute (abs) and relative (rel) AUC gain of DP-MixMin over Uniform pretraining at $\delta = 10^{-5}$. }
  \label{tab:group-gains}
  \centering
  \small
  \begin{tabular}{@{}l cc cc cc@{}}
    \toprule
    & \multicolumn{2}{c}{$\epsilon = 1$}
    & \multicolumn{2}{c}{$\epsilon = 2$}
    & \multicolumn{2}{c}{$\epsilon = 5$} \\
    \cmidrule(lr){2-3} \cmidrule(lr){4-5} \cmidrule(lr){6-7}
    & abs & rel & abs & rel & abs & rel \\
    \midrule
    \multicolumn{7}{@{}l}{\textit{CheXpert $\to$ NIH ChestX-ray14}} \\
    Group A ($n=7$) & $+0.040$ & $+5.6\%$ & $+0.030$ & $+4.0\%$ & $+0.029$ & $+3.8\%$ \\
    Group B ($n=7$) & $+0.034$ & $+6.0\%$ & $+0.034$ & $+5.7\%$ & $+0.028$ & $+4.7\%$ \\
    \midrule
    \multicolumn{7}{@{}l}{\textit{MIMIC-CXR $\to$ NIH ChestX-ray14}} \\
    Group A ($n=7$) & $+0.029$ & $+4.0\%$ & $+0.020$ & $+2.7\%$ & $+0.015$ & $+1.8\%$ \\
    Group B ($n=7$) & $+0.037$ & $+6.2\%$ & $+0.035$ & $+5.8\%$ & $+0.034$ & $+5.3\%$ \\
    \bottomrule
  \end{tabular}
\end{table}

\begin{table}[t]
  \caption{\textbf{Weight reconstruction of DP-MixMin across 14 NIH ChestX-ray14 tasks.} Mean $\pm$ std over $n{=}70$ paired comparisons (14 tasks $\times$ 5 seeds). We compare the private weights $\boldsymbol{\lambda}_{\mathrm{DP}}$ to the non-private MixMin solution $\boldsymbol{\lambda}^{\star}$ using cosine similarity $\mathrm{cos}_\star$ and $L_1$ distance. Uniform references are CheXpert: $\mathrm{cos}_{\mathbf{u}}{=}0.75$, $L_{1,\mathbf{u}}{=}0.68$; MIMIC-CXR: $\mathrm{cos}_{\mathbf{u}}{=}0.80$, $L_{1,\mathbf{u}}{=}0.60$. The DP-MixMin selection budget $\rho = 0.1$.  Bold values improve over the corresponding uniform reference.}
  \label{tab:vision-weights-all}
  \centering
  \scriptsize
  \setlength{\tabcolsep}{3pt}
  \resizebox{\linewidth}{!}{%
  \begin{tabular}{@{}l c cc cc cc cc cc@{}}
    \toprule
    & & \multicolumn{2}{c}{$\epsilon = 0.2$}
    & \multicolumn{2}{c}{$\epsilon = 0.5$}
    & \multicolumn{2}{c}{$\epsilon = 1$}
    & \multicolumn{2}{c}{$\epsilon = 2$}
    & \multicolumn{2}{c}{$\epsilon = 5$} \\
    \cmidrule(lr){3-4} \cmidrule(lr){5-6} \cmidrule(lr){7-8} \cmidrule(lr){9-10} \cmidrule(lr){11-12}
    src & $n$
    & $\mathrm{cos}_\star$ & $L_1$
    & $\mathrm{cos}_\star$ & $L_1$
    & $\mathrm{cos}_\star$ & $L_1$
    & $\mathrm{cos}_\star$ & $L_1$
    & $\mathrm{cos}_\star$ & $L_1$ \\
    \midrule
    \multicolumn{12}{@{}l}{\textit{CheXpert $\to$ NIH ChestX-ray14}} \\
    CheXpert & $70$
    & $0.70{\scriptstyle\pm.08}$ & $0.81{\scriptstyle\pm.14}$
    & $\mathbf{0.93}{\scriptstyle\pm.03}$ & $\mathbf{0.37}{\scriptstyle\pm.09}$
    & $\mathbf{0.97}{\scriptstyle\pm.01}$ & $\mathbf{0.24}{\scriptstyle\pm.06}$
    & $\mathbf{0.98}{\scriptstyle\pm.01}$ & $\mathbf{0.19}{\scriptstyle\pm.05}$
    & $\mathbf{0.99}{\scriptstyle\pm.01}$ & $\mathbf{0.16}{\scriptstyle\pm.04}$ \\
    \midrule
    \multicolumn{12}{@{}l}{\textit{MIMIC-CXR $\to$ NIH ChestX-ray14}} \\
    MIMIC-CXR & $70$
    & $0.58{\scriptstyle\pm.17}$ & $1.04{\scriptstyle\pm.30}$
    & $\mathbf{0.82}{\scriptstyle\pm.09}$ & $\mathbf{0.57}{\scriptstyle\pm.20}$
    & $\mathbf{0.93}{\scriptstyle\pm.04}$ & $\mathbf{0.33}{\scriptstyle\pm.10}$
    & $\mathbf{0.97}{\scriptstyle\pm.01}$ & $\mathbf{0.22}{\scriptstyle\pm.05}$
    & $\mathbf{0.98}{\scriptstyle\pm.01}$ & $\mathbf{0.16}{\scriptstyle\pm.03}$ \\
    \bottomrule
  \end{tabular}%
  }
\end{table}

\begin{table}[t]
  \caption{\textbf{Weight reconstruction of DP-MixMin for NIH Hernia} (0.20\% prevalence in $D^{\mathrm{priv}}$, 227 total positives). Mean $\pm$ std over 5 paired comparisons (5 seeds). We compare the private weights $\boldsymbol{\lambda}_{\mathrm{DP}}$ to the non-private MixMin solution $\boldsymbol{\lambda}^{\star}$ using cosine similarity $\mathrm{cos}_\star$ and $L_1$ distance. Uniform references: CheXpert $\mathrm{cos}_{\mathbf{u}}{=}0.64$, $L_{1,\mathbf{u}}{=}0.96$; MIMIC-CXR $\mathrm{cos}_{\mathbf{u}}{=}0.73$, $L_{1,\mathbf{u}}{=}0.82$. Bold values improve over the corresponding uniform reference.}
  \label{tab:weights-hernia}
  \centering
  \scriptsize
  \setlength{\tabcolsep}{4pt}
  \begin{tabular}{@{}l c cc cc cc@{}}
    \toprule
    & & \multicolumn{2}{c}{$\rho{=}0.05$} & \multicolumn{2}{c}{$\rho{=}0.1$} & \multicolumn{2}{c}{$\rho{=}0.2$} \\
    & & \multicolumn{2}{c}{\scriptsize $n{\approx}4{,}485$, \#pos${\approx}9$} & \multicolumn{2}{c}{\scriptsize $n{\approx}8{,}970$, \#pos${\approx}18$} & \multicolumn{2}{c}{\scriptsize $n{\approx}17{,}939$, \#pos${\approx}36$} \\
    \cmidrule(lr){3-4} \cmidrule(lr){5-6} \cmidrule(lr){7-8}
    Pretrain & $\varepsilon$ & $\mathrm{cos}_\star$ & $L_1$ & $\mathrm{cos}_\star$ & $L_1$ & $\mathrm{cos}_\star$ & $L_1$ \\
    \midrule
    CheXpert  & $0.2$ & $\mathbf{0.79}{\scriptstyle\pm0.01}$ & $\mathbf{0.64}{\scriptstyle\pm0.02}$ & $\mathbf{0.94}{\scriptstyle\pm0.01}$ & $\mathbf{0.32}{\scriptstyle\pm0.01}$ & $\mathbf{0.98}{\scriptstyle\pm0.00}$ & $\mathbf{0.17}{\scriptstyle\pm0.01}$ \\
              & $0.5$ & $\mathbf{0.96}{\scriptstyle\pm0.01}$ & $\mathbf{0.27}{\scriptstyle\pm0.03}$ & $\mathbf{0.99}{\scriptstyle\pm0.00}$ & $\mathbf{0.15}{\scriptstyle\pm0.01}$ & $\mathbf{0.99}{\scriptstyle\pm0.00}$ & $\mathbf{0.10}{\scriptstyle\pm0.01}$ \\
              & $1.0$ & $\mathbf{0.99}{\scriptstyle\pm0.00}$ & $\mathbf{0.16}{\scriptstyle\pm0.03}$ & $\mathbf{0.99}{\scriptstyle\pm0.00}$ & $\mathbf{0.10}{\scriptstyle\pm0.01}$ & $\mathbf{1.00}{\scriptstyle\pm0.00}$ & $\mathbf{0.08}{\scriptstyle\pm0.01}$ \\
              & $2.0$ & $\mathbf{0.99}{\scriptstyle\pm0.00}$ & $\mathbf{0.11}{\scriptstyle\pm0.02}$ & $\mathbf{1.00}{\scriptstyle\pm0.00}$ & $\mathbf{0.08}{\scriptstyle\pm0.01}$ & $\mathbf{1.00}{\scriptstyle\pm0.00}$ & $\mathbf{0.08}{\scriptstyle\pm0.01}$ \\
              & $5.0$ & $\mathbf{1.00}{\scriptstyle\pm0.00}$ & $\mathbf{0.09}{\scriptstyle\pm0.02}$ & $\mathbf{1.00}{\scriptstyle\pm0.00}$ & $\mathbf{0.08}{\scriptstyle\pm0.02}$ & $\mathbf{1.00}{\scriptstyle\pm0.00}$ & $\mathbf{0.08}{\scriptstyle\pm0.01}$ \\
    \midrule
    MIMIC-CXR & $0.2$ & $0.62{\scriptstyle\pm0.00}$ & $1.08{\scriptstyle\pm0.00}$ & $\mathbf{0.79}{\scriptstyle\pm0.00}$ & $\mathbf{0.67}{\scriptstyle\pm0.00}$ & $\mathbf{0.92}{\scriptstyle\pm0.00}$ & $\mathbf{0.35}{\scriptstyle\pm0.00}$ \\
              & $0.5$ & $\mathbf{0.83}{\scriptstyle\pm0.00}$ & $\mathbf{0.57}{\scriptstyle\pm0.00}$ & $\mathbf{0.94}{\scriptstyle\pm0.00}$ & $\mathbf{0.30}{\scriptstyle\pm0.00}$ & $\mathbf{0.98}{\scriptstyle\pm0.00}$ & $\mathbf{0.17}{\scriptstyle\pm0.00}$ \\
              & $1.0$ & $\mathbf{0.94}{\scriptstyle\pm0.00}$ & $\mathbf{0.31}{\scriptstyle\pm0.00}$ & $\mathbf{0.98}{\scriptstyle\pm0.00}$ & $\mathbf{0.17}{\scriptstyle\pm0.00}$ & $\mathbf{0.99}{\scriptstyle\pm0.00}$ & $\mathbf{0.10}{\scriptstyle\pm0.00}$ \\
              & $2.0$ & $\mathbf{0.98}{\scriptstyle\pm0.00}$ & $\mathbf{0.18}{\scriptstyle\pm0.00}$ & $\mathbf{0.99}{\scriptstyle\pm0.00}$ & $\mathbf{0.11}{\scriptstyle\pm0.00}$ & $\mathbf{1.00}{\scriptstyle\pm0.00}$ & $\mathbf{0.08}{\scriptstyle\pm0.00}$ \\
              & $5.0$ & $\mathbf{0.99}{\scriptstyle\pm0.00}$ & $\mathbf{0.10}{\scriptstyle\pm0.00}$ & $\mathbf{1.00}{\scriptstyle\pm0.00}$ & $\mathbf{0.07}{\scriptstyle\pm0.00}$ & $\mathbf{1.00}{\scriptstyle\pm0.00}$ & $\mathbf{0.06}{\scriptstyle\pm0.00}$ \\
    \bottomrule
  \end{tabular}
\end{table}

\begin{table}[t]
  \caption{\textbf{Weight reconstruction of DP-MixMin for NIH Fibrosis} (1.50\% prevalence in $D^{\mathrm{priv}}$, 1{,}686 total positives). Mean $\pm$ std over 5 paired comparisons (5 seeds). We compare the private weights $\boldsymbol{\lambda}_{\mathrm{DP}}$ to the non-private MixMin solution $\boldsymbol{\lambda}^{\star}$ using cosine similarity $\mathrm{cos}_\star$ and $L_1$ distance. Uniform references: CheXpert $\mathrm{cos}_{\mathbf{u}}{=}0.73$, $L_{1,\mathbf{u}}{=}0.77$; MIMIC-CXR $\mathrm{cos}_{\mathbf{u}}{=}0.83$, $L_{1,\mathbf{u}}{=}0.60$. Bold values improve over the corresponding uniform reference.}
  \label{tab:weights-pneumonia}
  \centering
  \scriptsize
  \setlength{\tabcolsep}{4pt}
  \begin{tabular}{@{}l c cc cc cc@{}}
    \toprule
    & & \multicolumn{2}{c}{$\rho{=}0.05$} & \multicolumn{2}{c}{$\rho{=}0.1$} & \multicolumn{2}{c}{$\rho{=}0.2$} \\
    & & \multicolumn{2}{c}{\scriptsize $n{\approx}4{,}485$, \#pos${\approx}57$} & \multicolumn{2}{c}{\scriptsize $n{\approx}8{,}970$, \#pos${\approx}114$} & \multicolumn{2}{c}{\scriptsize $n{\approx}17{,}939$, \#pos${\approx}229$} \\
    \cmidrule(lr){3-4} \cmidrule(lr){5-6} \cmidrule(lr){7-8}
    Pretrain & $\varepsilon$ & $\mathrm{cos}_\star$ & $L_1$ & $\mathrm{cos}_\star$ & $L_1$ & $\mathrm{cos}_\star$ & $L_1$ \\
    \midrule
    CheXpert  & $0.2$ & $\mathbf{0.78}{\scriptstyle\pm0.02}$ & $\mathbf{0.70}{\scriptstyle\pm0.03}$ & $\mathbf{0.94}{\scriptstyle\pm0.01}$ & $\mathbf{0.38}{\scriptstyle\pm0.02}$ & $\mathbf{0.98}{\scriptstyle\pm0.00}$ & $\mathbf{0.23}{\scriptstyle\pm0.01}$ \\
              & $0.5$ & $\mathbf{0.95}{\scriptstyle\pm0.01}$ & $\mathbf{0.33}{\scriptstyle\pm0.04}$ & $\mathbf{0.98}{\scriptstyle\pm0.00}$ & $\mathbf{0.20}{\scriptstyle\pm0.02}$ & $\mathbf{0.99}{\scriptstyle\pm0.00}$ & $\mathbf{0.16}{\scriptstyle\pm0.01}$ \\
              & $1.0$ & $\mathbf{0.98}{\scriptstyle\pm0.01}$ & $\mathbf{0.22}{\scriptstyle\pm0.03}$ & $\mathbf{0.99}{\scriptstyle\pm0.00}$ & $\mathbf{0.16}{\scriptstyle\pm0.01}$ & $\mathbf{0.99}{\scriptstyle\pm0.00}$ & $\mathbf{0.16}{\scriptstyle\pm0.01}$ \\
              & $2.0$ & $\mathbf{0.99}{\scriptstyle\pm0.00}$ & $\mathbf{0.18}{\scriptstyle\pm0.03}$ & $\mathbf{0.99}{\scriptstyle\pm0.00}$ & $\mathbf{0.16}{\scriptstyle\pm0.01}$ & $\mathbf{0.99}{\scriptstyle\pm0.00}$ & $\mathbf{0.15}{\scriptstyle\pm0.01}$ \\
              & $5.0$ & $\mathbf{0.99}{\scriptstyle\pm0.00}$ & $\mathbf{0.17}{\scriptstyle\pm0.02}$ & $\mathbf{0.99}{\scriptstyle\pm0.00}$ & $\mathbf{0.15}{\scriptstyle\pm0.01}$ & $\mathbf{0.99}{\scriptstyle\pm0.00}$ & $\mathbf{0.15}{\scriptstyle\pm0.01}$ \\
    \midrule
    MIMIC-CXR & $0.2$ & $0.59{\scriptstyle\pm0.00}$ & $1.10{\scriptstyle\pm0.01}$ & $0.78{\scriptstyle\pm0.00}$ & $0.69{\scriptstyle\pm0.01}$ & $\mathbf{0.92}{\scriptstyle\pm0.00}$ & $\mathbf{0.37}{\scriptstyle\pm0.01}$ \\
              & $0.5$ & $0.82{\scriptstyle\pm0.00}$ & $0.59{\scriptstyle\pm0.01}$ & $\mathbf{0.94}{\scriptstyle\pm0.00}$ & $\mathbf{0.32}{\scriptstyle\pm0.01}$ & $\mathbf{0.98}{\scriptstyle\pm0.00}$ & $\mathbf{0.19}{\scriptstyle\pm0.00}$ \\
              & $1.0$ & $\mathbf{0.93}{\scriptstyle\pm0.00}$ & $\mathbf{0.34}{\scriptstyle\pm0.01}$ & $\mathbf{0.98}{\scriptstyle\pm0.00}$ & $\mathbf{0.21}{\scriptstyle\pm0.00}$ & $\mathbf{0.99}{\scriptstyle\pm0.00}$ & $\mathbf{0.15}{\scriptstyle\pm0.01}$ \\
              & $2.0$ & $\mathbf{0.97}{\scriptstyle\pm0.00}$ & $\mathbf{0.21}{\scriptstyle\pm0.01}$ & $\mathbf{0.99}{\scriptstyle\pm0.00}$ & $\mathbf{0.16}{\scriptstyle\pm0.00}$ & $\mathbf{0.99}{\scriptstyle\pm0.00}$ & $\mathbf{0.13}{\scriptstyle\pm0.01}$ \\
              & $5.0$ & $\mathbf{0.99}{\scriptstyle\pm0.00}$ & $\mathbf{0.15}{\scriptstyle\pm0.01}$ & $\mathbf{0.99}{\scriptstyle\pm0.00}$ & $\mathbf{0.14}{\scriptstyle\pm0.01}$ & $\mathbf{1.00}{\scriptstyle\pm0.00}$ & $\mathbf{0.10}{\scriptstyle\pm0.01}$ \\
    \bottomrule
  \end{tabular}
\end{table}

\begin{table}[t]
  \caption{\textbf{Enron weight recovery under user-level DP with $K{=}16$ Common Pile proxies.} Mean $\pm$ std over 3 paired comparisons (3 seeds). We compare the private mixture weights $\boldsymbol{\lambda}_{\mathrm{DP}}$ to the non-private MixMin solution $\boldsymbol{\lambda}^{\star}$ using cosine similarity $\mathrm{cos}_\star$ and $L_1$ distance. Columns vary the DP-MixMin selection budget $\rho$, the fraction of the private mixture-learning split used to estimate the mixture weights; the second header row reports the corresponding number of Enron users. The uniform reference is $\mathrm{cos}_{\mathbf{u}}{=}0.40$ and $L_{1,\mathbf{u}}{=}1.47$. Bold values improve over the uniform reference on that metric.}
  \label{tab:enron-weights}
  \centering
  \scriptsize
  \setlength{\tabcolsep}{3pt}
  \resizebox{\linewidth}{!}{%
  \begin{tabular}{@{}l cc cc cc cc cc cc@{}}
    \toprule
    & \multicolumn{2}{c}{$\rho{=}0.05$}
    & \multicolumn{2}{c}{$\rho{=}0.1$}
    & \multicolumn{2}{c}{$\rho{=}0.15$}
    & \multicolumn{2}{c}{$\rho{=}0.2$}
    & \multicolumn{2}{c}{$\rho{=}0.3$}
    & \multicolumn{2}{c}{$\rho{=}0.5$} \\
    & \multicolumn{2}{c}{\scriptsize (6 users)}
    & \multicolumn{2}{c}{\scriptsize (12 users)}
    & \multicolumn{2}{c}{\scriptsize (18 users)}
    & \multicolumn{2}{c}{\scriptsize (24 users)}
    & \multicolumn{2}{c}{\scriptsize (35 users)}
    & \multicolumn{2}{c}{\scriptsize (59 users)} \\
    \cmidrule(lr){2-3} \cmidrule(lr){4-5} \cmidrule(lr){6-7}
    \cmidrule(lr){8-9} \cmidrule(lr){10-11} \cmidrule(lr){12-13}
    $\varepsilon$
    & $\mathrm{cos}_\star$ & $L_1$
    & $\mathrm{cos}_\star$ & $L_1$
    & $\mathrm{cos}_\star$ & $L_1$
    & $\mathrm{cos}_\star$ & $L_1$
    & $\mathrm{cos}_\star$ & $L_1$
    & $\mathrm{cos}_\star$ & $L_1$ \\
    \midrule
    $0.2$ & --- & ---
          & $0.08{\scriptstyle\pm0.11}$ & $1.89{\scriptstyle\pm0.13}$
          & $0.01{\scriptstyle\pm0.00}$ & $1.98{\scriptstyle\pm0.01}$
          & $0.34{\scriptstyle\pm0.44}$ & $\mathbf{1.41}{\scriptstyle\pm0.72}$
          & $0.19{\scriptstyle\pm0.16}$ & $1.65{\scriptstyle\pm0.16}$
          & $\mathbf{0.44}{\scriptstyle\pm0.24}$ & $\mathbf{1.25}{\scriptstyle\pm0.28}$ \\
    $0.5$ & $0.12{\scriptstyle\pm0.15}$ & $1.85{\scriptstyle\pm0.19}$
          & $0.15{\scriptstyle\pm0.20}$ & $1.75{\scriptstyle\pm0.30}$
          & $0.07{\scriptstyle\pm0.04}$ & $1.75{\scriptstyle\pm0.13}$
          & $\mathbf{0.70}{\scriptstyle\pm0.30}$ & $\mathbf{0.80}{\scriptstyle\pm0.46}$
          & $\mathbf{0.86}{\scriptstyle\pm0.13}$ & $\mathbf{0.70}{\scriptstyle\pm0.13}$
          & $\mathbf{0.95}{\scriptstyle\pm0.01}$ & $\mathbf{0.54}{\scriptstyle\pm0.08}$ \\
    $1$   & $0.12{\scriptstyle\pm0.15}$ & $1.83{\scriptstyle\pm0.19}$
          & $\mathbf{0.61}{\scriptstyle\pm0.26}$ & $\mathbf{1.08}{\scriptstyle\pm0.38}$
          & $\mathbf{0.61}{\scriptstyle\pm0.26}$ & $\mathbf{1.01}{\scriptstyle\pm0.27}$
          & $\mathbf{0.97}{\scriptstyle\pm0.03}$ & $\mathbf{0.36}{\scriptstyle\pm0.15}$
          & $\mathbf{0.97}{\scriptstyle\pm0.03}$ & $\mathbf{0.46}{\scriptstyle\pm0.07}$
          & $\mathbf{0.99}{\scriptstyle\pm0.00}$ & $\mathbf{0.40}{\scriptstyle\pm0.03}$ \\
    $2$   & $0.16{\scriptstyle\pm0.13}$ & $1.68{\scriptstyle\pm0.15}$
          & $\mathbf{0.92}{\scriptstyle\pm0.05}$ & $\mathbf{0.65}{\scriptstyle\pm0.22}$
          & $\mathbf{0.93}{\scriptstyle\pm0.05}$ & $\mathbf{0.45}{\scriptstyle\pm0.14}$
          & $\mathbf{0.99}{\scriptstyle\pm0.00}$ & $\mathbf{0.28}{\scriptstyle\pm0.06}$
          & $\mathbf{0.99}{\scriptstyle\pm0.01}$ & $\mathbf{0.37}{\scriptstyle\pm0.03}$
          & $\mathbf{0.99}{\scriptstyle\pm0.00}$ & $\mathbf{0.36}{\scriptstyle\pm0.01}$ \\
    $5$   & $\mathbf{0.93}{\scriptstyle\pm0.04}$ & $\mathbf{0.48}{\scriptstyle\pm0.14}$
          & $\mathbf{0.98}{\scriptstyle\pm0.00}$ & $\mathbf{0.41}{\scriptstyle\pm0.10}$
          & $\mathbf{0.99}{\scriptstyle\pm0.00}$ & $\mathbf{0.17}{\scriptstyle\pm0.05}$
          & $\mathbf{1.00}{\scriptstyle\pm0.00}$ & $\mathbf{0.28}{\scriptstyle\pm0.02}$
          & $\mathbf{0.99}{\scriptstyle\pm0.00}$ & $\mathbf{0.33}{\scriptstyle\pm0.01}$
          & $\mathbf{0.99}{\scriptstyle\pm0.00}$ & $\mathbf{0.34}{\scriptstyle\pm0.01}$ \\
    $10$  & $\mathbf{0.98}{\scriptstyle\pm0.01}$ & $\mathbf{0.28}{\scriptstyle\pm0.05}$
          & $\mathbf{0.99}{\scriptstyle\pm0.00}$ & $\mathbf{0.17}{\scriptstyle\pm0.04}$
          & $\mathbf{1.00}{\scriptstyle\pm0.00}$ & $\mathbf{0.09}{\scriptstyle\pm0.02}$
          & $\mathbf{1.00}{\scriptstyle\pm0.00}$ & $\mathbf{0.29}{\scriptstyle\pm0.01}$
          & $\mathbf{0.99}{\scriptstyle\pm0.00}$ & $\mathbf{0.32}{\scriptstyle\pm0.01}$
          & $\mathbf{0.99}{\scriptstyle\pm0.00}$ & $\mathbf{0.33}{\scriptstyle\pm0.01}$ \\
    \bottomrule
  \end{tabular}%
  }
\end{table}

\begin{table}[t]
  \caption{\textbf{OpenWebMath weight recovery under example-level DP with $K{=}16$ Common Pile proxies.} Mean $\pm$ std over 3 paired comparisons (3 seeds). We compare the private mixture weights $\boldsymbol{\lambda}_{\mathrm{DP}}$ to the non-private MixMin solution $\boldsymbol{\lambda}^{\star}$ using cosine similarity $\mathrm{cos}_\star$ and $L_1$ distance. Columns vary the DP-MixMin selection budget $\rho$, the fraction of the private mixture-learning split used to estimate the mixture weights; the second header row reports the corresponding number of OpenWebMath tokens. The uniform reference is $\mathrm{cos}_{\mathbf{u}}{=}0.40$ and $L_{1,\mathbf{u}}{=}1.28$. Bold values improve over the uniform reference on that metric.}
  \label{tab:owm-weights}
  \centering
  \scriptsize
  \setlength{\tabcolsep}{3pt}
  \resizebox{\linewidth}{!}{%
  \begin{tabular}{@{}l cc cc cc cc@{}}
    \toprule
    & \multicolumn{2}{c}{$\rho{=}0.01$} & \multicolumn{2}{c}{$\rho{=}0.02$} & \multicolumn{2}{c}{$\rho{=}0.05$} & \multicolumn{2}{c}{$\rho{=}0.1$} \\
    & \multicolumn{2}{c}{\scriptsize (275K tokens)} & \multicolumn{2}{c}{\scriptsize (552K tokens)} & \multicolumn{2}{c}{\scriptsize (1.4M tokens)} & \multicolumn{2}{c}{\scriptsize (2.8M tokens)} \\
    \cmidrule(lr){2-3} \cmidrule(lr){4-5} \cmidrule(lr){6-7} \cmidrule(lr){8-9}
    $\varepsilon$ & $\mathrm{cos}_\star$ & $L_1$ & $\mathrm{cos}_\star$ & $L_1$ & $\mathrm{cos}_\star$ & $L_1$ & $\mathrm{cos}_\star$ & $L_1$ \\
    \midrule
    $0.2$ & $0.17{\scriptstyle\pm0.08}$ & $1.66{\scriptstyle\pm0.14}$ & $0.34{\scriptstyle\pm0.17}$ & $1.38{\scriptstyle\pm0.23}$ & $\mathbf{0.78}{\scriptstyle\pm0.10}$ & $\mathbf{0.88}{\scriptstyle\pm0.14}$ & $\mathbf{0.92}{\scriptstyle\pm0.02}$ & $\mathbf{0.72}{\scriptstyle\pm0.04}$ \\
    $0.4$ & $0.32{\scriptstyle\pm0.14}$ & $1.39{\scriptstyle\pm0.20}$ & $\mathbf{0.68}{\scriptstyle\pm0.13}$ & $\mathbf{0.99}{\scriptstyle\pm0.16}$ & $\mathbf{0.92}{\scriptstyle\pm0.02}$ & $\mathbf{0.72}{\scriptstyle\pm0.04}$ & $\mathbf{0.94}{\scriptstyle\pm0.00}$ & $\mathbf{0.67}{\scriptstyle\pm0.01}$ \\
    $0.6$ & $\mathbf{0.49}{\scriptstyle\pm0.15}$ & $\mathbf{1.19}{\scriptstyle\pm0.20}$ & $\mathbf{0.82}{\scriptstyle\pm0.07}$ & $\mathbf{0.84}{\scriptstyle\pm0.10}$ & $\mathbf{0.94}{\scriptstyle\pm0.01}$ & $\mathbf{0.69}{\scriptstyle\pm0.02}$ & $\mathbf{0.95}{\scriptstyle\pm0.00}$ & $\mathbf{0.67}{\scriptstyle\pm0.01}$ \\
    $0.8$ & $\mathbf{0.64}{\scriptstyle\pm0.12}$ & $\mathbf{1.02}{\scriptstyle\pm0.15}$ & $\mathbf{0.88}{\scriptstyle\pm0.04}$ & $\mathbf{0.77}{\scriptstyle\pm0.06}$ & $\mathbf{0.94}{\scriptstyle\pm0.01}$ & $\mathbf{0.68}{\scriptstyle\pm0.01}$ & $\mathbf{0.95}{\scriptstyle\pm0.00}$ & $\mathbf{0.67}{\scriptstyle\pm0.01}$ \\
    $1$   & $\mathbf{0.73}{\scriptstyle\pm0.09}$ & $\mathbf{0.92}{\scriptstyle\pm0.12}$ & $\mathbf{0.91}{\scriptstyle\pm0.02}$ & $\mathbf{0.74}{\scriptstyle\pm0.04}$ & $\mathbf{0.94}{\scriptstyle\pm0.00}$ & $\mathbf{0.67}{\scriptstyle\pm0.00}$ & $\mathbf{1.00}{\scriptstyle\pm0.00}$ & $\mathbf{0.05}{\scriptstyle\pm0.03}$ \\
    $2$   & $\mathbf{0.90}{\scriptstyle\pm0.01}$ & $\mathbf{0.74}{\scriptstyle\pm0.04}$ & $\mathbf{0.94}{\scriptstyle\pm0.00}$ & $\mathbf{0.68}{\scriptstyle\pm0.02}$ & $\mathbf{0.95}{\scriptstyle\pm0.00}$ & $\mathbf{0.67}{\scriptstyle\pm0.01}$ & $\mathbf{1.00}{\scriptstyle\pm0.00}$ & $\mathbf{0.04}{\scriptstyle\pm0.01}$ \\
    $5$   & $\mathbf{0.94}{\scriptstyle\pm0.01}$ & $\mathbf{0.67}{\scriptstyle\pm0.01}$ & $\mathbf{0.94}{\scriptstyle\pm0.01}$ & $\mathbf{0.67}{\scriptstyle\pm0.02}$ & $\mathbf{0.95}{\scriptstyle\pm0.00}$ & $\mathbf{0.67}{\scriptstyle\pm0.01}$ & $\mathbf{1.00}{\scriptstyle\pm0.00}$ & $\mathbf{0.03}{\scriptstyle\pm0.01}$ \\
    \bottomrule
  \end{tabular}%
  }
\end{table}

\begin{table}[t]
\caption{\textbf{Empirical privacy audit of DP-MixMin's mixture-learning step, CheXpert $\to$ NIH ChestX-ray14, at $\boldsymbol{\varepsilon=1}$.} We audit the released mixture weights $\boldsymbol{\lambda}=M_{\mathrm{mix}}(D_{\mathrm{mix}})$, reporting the Clopper--Pearson empirical lower bound $\varepsilon_{\mathrm{LB}}$ at $\varepsilon=1$, $\delta=10^{-5}$, $\rho=0.1$, and $k=14$ proxy sources, over \textbf{2000} runs on add/remove-neighboring datasets (attack fit on a SELECT half, bound computed on a disjoint EVAL half; $95\%$ confidence). The \emph{natural} canary is a real held-out example; the \emph{max-clip} canary injects a worst-case gradient at the clipping norm. \emph{Private} audits at $\varepsilon=1$; \emph{No-noise control} removes the DP noise. Every private $\varepsilon_{\mathrm{LB}}\ll 1$ while controls recover large leakage, confirming audit fidelity.}
  \label{tab:audit-chexpert}
  \centering
  \scriptsize
  \setlength{\tabcolsep}{5pt}
  \begin{tabular}{@{}l c cc cc@{}}
    \toprule
    & & \multicolumn{2}{c}{\textbf{Natural canary}} & \multicolumn{2}{c}{\textbf{Max-clipped-gradient canary}} \\
    \cmidrule(lr){3-4} \cmidrule(lr){5-6}
    \textbf{Class} & \textbf{\% pos} & Private ($\varepsilon{=}1$) & No-noise ctrl. & Private ($\varepsilon{=}1$) & No-noise ctrl. \\
    \midrule
    Atelectasis        & 10.31 & 0.000 & 0.261 & 0.080 & 3.71 \\
    Consolidation      &  4.16 & 0.000 & 0.501 & 0.052 & 5.19 \\
    Infiltration       & 17.74 & 0.000 & 0.088 & 0.000 & 2.27 \\
    Pneumothorax       &  4.73 & 0.000 & 0.068 & 0.080 & 5.18 \\
    Edema              &  2.05 & 0.000 & 0.404 & 0.000 & 4.93 \\
    Emphysema          &  2.24 & 0.000 & 0.619 & 0.054 & 4.93 \\
    Fibrosis           &  1.50 & 0.000 & 1.188 & 0.091 & 5.18 \\
    Effusion           & 11.88 & 0.000 & 0.000 & 0.028 & 3.73 \\
    Pneumonia          &  1.28 & 0.000 & 0.344 & 0.098 & 5.60 \\
    Pleural Thickening &  3.02 & 0.000 & 1.226 & \textbf{0.131} & 5.60 \\
    Cardiomegaly       &  2.48 & 0.000 & 0.148 & 0.000 & 4.93 \\
    Nodule             &  5.65 & 0.000 & 0.539 & 0.000 & 5.60 \\
    Mass               &  5.16 & 0.000 & 0.000 & 0.000 & 5.60 \\
    Hernia             &  0.20 & 0.000 & 2.360 & 0.086 & 5.60 \\
    \midrule
    \textbf{max over tasks} & --- & 0.000 & 2.360 & \textbf{0.131} & 5.60 \\
    \bottomrule
  \end{tabular}
\end{table}

\begin{table}[t]
\caption{\textbf{Empirical privacy audit of DP-MixMin's mixture-learning step, MIMIC-CXR $\to$ NIH ChestX-ray14, at $\boldsymbol{\varepsilon=1}$.} Identical protocol to Table~\ref{tab:audit-chexpert}: Clopper--Pearson $\varepsilon_{\mathrm{LB}}$ at $\varepsilon=1$, $\delta=10^{-5}$, $\rho=0.1$, and $k=14$ MIMIC-CXR proxy sources, over 2000 neighboring-dataset runs (SELECT/EVAL split, $95\%$ confidence). As for CheXpert, every private $\varepsilon_{\mathrm{LB}}\ll 1$ while controls recover substantial leakage.}
  \label{tab:audit-mimic}
  \centering
  \scriptsize
  \setlength{\tabcolsep}{5pt}
  \begin{tabular}{@{}l c cc cc@{}}
    \toprule
    & & \multicolumn{2}{c}{\textbf{Natural canary}} & \multicolumn{2}{c}{\textbf{Max-clipped-gradient canary}} \\
    \cmidrule(lr){3-4} \cmidrule(lr){5-6}
    \textbf{Class} & \textbf{\% pos} & Private ($\varepsilon{=}1$) & No-noise ctrl. & Private ($\varepsilon{=}1$) & No-noise ctrl. \\
    \midrule
    Atelectasis        & 10.31 & 0.000 & 0.000 & 0.000 & 4.02 \\
    Consolidation      &  4.16 & 0.000 & 0.501 & 0.000 & 4.72 \\
    Infiltration       & 17.74 & 0.000 & 0.000 & 0.001 & 3.21 \\
    Pneumothorax       &  4.73 & 0.000 & 0.022 & 0.000 & 4.92 \\
    Edema              &  2.05 & 0.000 & 0.672 & 0.000 & 5.59 \\
    Emphysema          &  2.24 & 0.000 & 0.000 & 0.000 & 5.40 \\
    Fibrosis           &  1.50 & 0.000 & 2.627 & 0.056 & 4.44 \\
    Effusion           & 11.88 & 0.000 & 0.000 & 0.000 & 2.87 \\
    Pneumonia          &  1.28 & 0.000 & 0.480 & 0.000 & 5.17 \\
    Pleural Thickening &  3.02 & 0.000 & 1.147 & 0.000 & 5.07 \\
    Cardiomegaly       &  2.48 & 0.000 & 1.895 & \textbf{0.101} & 5.51 \\
    Nodule             &  5.65 & 0.000 & 0.582 & 0.000 & 4.73 \\
    Mass               &  5.16 & 0.000 & 0.551 & 0.000 & 4.76 \\
    Hernia             &  0.20 & 0.000 & 4.934 & 0.061 & 4.92 \\
    \midrule
    \textbf{max over tasks} & --- & 0.000 & 4.934 & \textbf{0.101} & 5.51 \\
    \bottomrule
  \end{tabular}
\end{table}

\begin{table}[t]
  \caption{\textbf{DP-MixMin vs.\ its non-private version, CheXpert $\to$ NIH ChestX-ray14 at $\boldsymbol{\varepsilon=1}$.} Per-class final AUC (mean $\pm$ std over 3--5 seeds, $\delta=10^{-5}$). Classes are ordered by NIH index (Table~\ref{tab:nih-chexpert-mapping}). DP-MixMin matches the non-private upper bound to within $0.001$ macro AUC.}
  \label{tab:chexpert-nonpriv-vs-dp}
  \centering
  \small
  \setlength{\tabcolsep}{6pt}
  \begin{tabular}{@{}l cc@{}}
    \toprule
    \textbf{Class} & \textbf{Non-private MixMin} & \textbf{DP-MixMin ($\varepsilon{=}1$)} \\
    \midrule
    Atelectasis        & $0.745 \pm 0.007$ & $0.747 \pm 0.006$ \\
    Consolidation      & $0.753 \pm 0.014$ & $0.756 \pm 0.012$ \\
    Infiltration       & $0.669 \pm 0.005$ & $0.669 \pm 0.006$ \\
    Pneumothorax       & $0.796 \pm 0.008$ & $0.789 \pm 0.018$ \\
    Edema              & $0.819 \pm 0.012$ & $0.809 \pm 0.019$ \\
    Emphysema          & $0.715 \pm 0.026$ & $0.746 \pm 0.025$ \\
    Fibrosis           & $0.645 \pm 0.024$ & $0.655 \pm 0.019$ \\
    Effusion           & $0.850 \pm 0.006$ & $0.851 \pm 0.005$ \\
    Pneumonia          & $0.595 \pm 0.015$ & $0.592 \pm 0.019$ \\
    Pleural Thickening & $0.688 \pm 0.017$ & $0.671 \pm 0.022$ \\
    Cardiomegaly       & $0.878 \pm 0.007$ & $0.868 \pm 0.013$ \\
    Nodule             & $0.585 \pm 0.012$ & $0.570 \pm 0.012$ \\
    Mass               & $0.691 \pm 0.004$ & $0.670 \pm 0.011$ \\
    Hernia             & $0.545 \pm 0.030$ & $0.567 \pm 0.029$ \\
    \midrule
    \textbf{Macro AUC} & \textbf{0.712} & \textbf{0.711} \\
    \bottomrule
  \end{tabular}
\end{table}

\begin{table}[t]
  \caption{\textbf{DP-MixMin vs.\ its non-private version, MIMIC-CXR $\to$ NIH ChestX-ray14 at $\boldsymbol{\varepsilon=1}$.} Per-class final AUC (mean $\pm$ std over seeds, $\delta=10^{-5}$). \emph{Non-private MixMin} finds the mixture with no privacy and fine-tunes on the \emph{full} private set (both splits), so it is a strict upper bound; DP-MixMin reserves the mixing split (parallel composition) and privatizes both steps. DP-MixMin matches the non-private upper bound to within $0.006$ macro AUC.}
  \label{tab:mimic-nonpriv-vs-dp}
  \centering
  \small
  \setlength{\tabcolsep}{6pt}
  \begin{tabular}{@{}l cc@{}}
    \toprule
    \textbf{Class} & \textbf{Non-private MixMin} & \textbf{DP-MixMin ($\varepsilon{=}1$)} \\
    \midrule
    Atelectasis        & $0.758 \pm 0.006$ & $0.749 \pm 0.010$ \\
    Consolidation      & $0.764 \pm 0.015$ & $0.761 \pm 0.013$ \\
    Infiltration       & $0.678 \pm 0.008$ & $0.676 \pm 0.003$ \\
    Pneumothorax       & $0.779 \pm 0.004$ & $0.754 \pm 0.014$ \\
    Edema              & $0.838 \pm 0.007$ & $0.835 \pm 0.010$ \\
    Emphysema          & $0.745 \pm 0.018$ & $0.720 \pm 0.028$ \\
    Fibrosis           & $0.654 \pm 0.024$ & $0.672 \pm 0.029$ \\
    Effusion           & $0.855 \pm 0.005$ & $0.851 \pm 0.004$ \\
    Pneumonia          & $0.656 \pm 0.022$ & $0.658 \pm 0.016$ \\
    Pleural Thickening & $0.687 \pm 0.023$ & $0.692 \pm 0.014$ \\
    Cardiomegaly       & $0.850 \pm 0.011$ & $0.832 \pm 0.030$ \\
    Nodule             & $0.613 \pm 0.007$ & $0.580 \pm 0.007$ \\
    Mass               & $0.700 \pm 0.017$ & $0.675 \pm 0.013$ \\
    Hernia             & $0.543 \pm 0.064$ & $0.584 \pm 0.069$ \\
    \midrule
    \textbf{Macro AUC} & \textbf{0.723} & \textbf{0.717} \\
    \bottomrule
  \end{tabular}
\end{table}

\section{Dataset Access and Asset Licenses}
\label{app:asset_licenses}

All restricted or credentialed datasets were accessed under their applicable
data-use terms. For PhysioNet credentialed-access datasets, the authors completed
the required credentialing process and signed the applicable Data Use Agreement
before accessing the data. We do not redistribute restricted datasets.

\paragraph{NIH Chest X-rays.}
Accessed under the CCO: public domain license which gives free-use. The dataset is available at \url{https://www.kaggle.com/datasets/nih-chest-xrays/data}

\paragraph{CheXpert.}
Accessed under the Stanford CheXpert Dataset Research Use Agreement, which
permits non-commercial research use. The dataset is available at
\url{https://stanfordmlgroup.github.io/competitions/chexpert}.

\paragraph{MIMIC-CXR.}
Accessed under the PhysioNet Credentialed Health Data License~1.5.0.
Access requires completion of a credentialing process and signing a Data Use
Agreement through PhysioNet~\citep{johnson2019mimiccxr}.

\paragraph{Enron Email Dataset.}
The Enron Corpus was released into the public domain following federal litigation and is freely available for research use~\citep{klimt2004introducing}.

\paragraph{The Common Pile.}
Released under the Open Data Commons Attribution License (ODC-By-1.0). Individual constituent sources carry their own licenses; we refer readers to the dataset documentation for a full provenance breakdown~\citep{kandpal2025commonpilev018tb}.